\newcommand{\subtitle}[1]{%
	\posttitle{%
		\par\end{center}
	\begin{center}\sc#1\end{center}
	\vskip1em}}%
\newcolumntype{L}[1]{>{\raggedright\let\newline\\\arraybackslash\hspace{0pt}}m{#1}}
\newcolumntype{C}[1]{>{\centering\let\newline\\\arraybackslash\hspace{0pt}}m{#1}}
\newcolumntype{R}[1]{>{\raggedleft\let\newline\\\arraybackslash\hspace{0pt}}m{#1}}
\tikzset{%
	base/.style = {rectangle, draw=black,
		minimum width=4cm, minimum height=1cm,
		text centered, font=\sffamily},
	binary/.style = {base, minimum width=1cm},
	startstop/.style = {base, fill=red!30, minimum width=2cm},
	activityRuns/.style = {base, fill=green!30},
	process/.style = {base, minimum width=2cm, fill=gray!15,
		font=\ttfamily},
	sum/.style      = {draw, circle, node distance = 1.5cm},
}
\theoremstyle{plain}
\newtheorem{theorem}{Theorem}[section]
\newtheorem{proposition}[theorem]{Proposition}
\newtheorem{definition}[theorem]{Definition}
\theoremstyle{remark}
\newtheorem{remark}{\textit{Remark}}
\newtheorem*{remark*}{\textit{Remark}}
\newtheorem{example}{\textit{Example}}
\newtheorem*{example*}{\textit{Example}}
\newtheorem{notation}{\textit{Notation}}
\newtheorem*{notation*}{\textit{Notation}}
\newcommand{\fref}[1]{Fig.~\ref{#1}}
\newcommand{\tref}[1]{Tab.~\ref{#1}}
\newcommand{\eref}[1]{(\ref{#1})}
\newcommand{\sref}[1]{Section~\ref{#1}}
\newcommand{\aref}[1]{Appendix~\ref{#1}}
\newcommand{\propref}[1]{Proposition~\ref{#1}}
\newcommand{\defref}[1]{Definition~\ref{#1}}
\newcommand{\True}{\mathrm{T}}
\newcommand{\False}{\mathrm{F}}
\newcommand{\xor}{\mathbf{xor}}
\newcommand{\xnor}{\mathbf{xnor}}
\newcommand{\andd}{\mathbf{and}}
\newcommand{\orr}{\mathbf{or}}
\newcommand{\bool}{\mathrm{bool}}
\def\Bb{\mathbf{B}}
\def\Bm{\mathrm{B}}
\def\Db{\mathbf{D}}
\def\Fc{\mathcal{F}}
\def\Lb{\mathbf{L}}
\def\Mb{\mathbf{M}}
\def\Nb{\mathbf{N}}
\def\Rb{\mathbf{R}}
\def\Sb{\mathbf{S}}
\def\Zb{\mathbf{Z}}
\def\xb{\mathbf{x}}
\newcommand{\pren}[1]{\mleft(#1\mright)}
\DeclareMathOperator*{\deq}{\overset{{\rm def}}{=}}
\newcommand{\bvar}[1]{\delta#1} 
\newcommand{\one}[1]{\mathbf{1}\mleft(#1\mright)}
\def\sign{\operatorname{sign}}
\def\proj{\operatorname{p}}
\def\emb{\operatorname{e}}
\def\env@sqcases{%
	\let\@ifnextchar\new@ifnextchar
	\left\lbrack
	\def\arraystretch{1.2}%
	\array{@{}l@{\quad}l@{}}%
}
\begin{document}


\title{Boolean Variation and Boolean Logic BackPropagation}
\subtitle{A Mathematical Principle of Boolean Logic Deep Learning}
\author{Van Minh Nguyen\\
	Intelligent Computing and Communication Research\\
	Advanced Wireless Technology Laboratory\\ 
	Paris Research Center, Huawei Technologies\\
	\texttt{vanminh.nguyen@huawei.com}
	}

\date{\today}
\maketitle


\begin{abstract}
	The notion of variation is introduced for the Boolean set and based on which Boolean logic backpropagation principle is developed. Using this concept, deep models can be built with weights and activations being Boolean numbers and operated with Boolean logic instead of real arithmetic. In particular, Boolean deep models can be trained directly in the Boolean domain without latent weights. No gradient but logic is synthesized and backpropagated through layers.
\end{abstract}

\section{Introduction}\label{sec:Intro}

Deep learning has become a commonplace solution to numerous modern problems, occupying a central spot of today's technological and social attention. The recipe of its power is the combining effect of unprecedented large dimensions and learning process based on gradient backpropagation \citep{LeCun1998}. In particular, thanks to the simplicity of neuron model that is decomposed into a weighted linear sum followed by a non-linear activation function, the gradient of weights is solely determined by their respective input without involving cross-parameter dependency. Hence, in terms of computational process, gradient backpropagation is automated by gradient chain rule and only requires a buffering of the forward input data.

However, deep learning is computationally intensive. \fref{fig:Intro_train_infer} shows its typical operations where the forward pass is used in both inference and training while the backpropagation is only for the training. In inference, the whole model parameters must be stored and the main computation is tensor-dot products. In training, the forward pass, in addition to that of the inference, needs to buffer all input tensors of every layer. They are used for tensor-dot products required by the derivative computation, gradient-descent-based optimizer, and gradient backpropagation. Also required by the gradient-based learning principle, model parameters and all signals are continuous numbers that are typically represented in 32-bit floating-point format.  
It results in large memory footprint. \fref{fig:Intro_Memories} shows our one example to illustrate the memory footprint of CIFAR-10 classification with VGG-small \citep{Simonyan2014}. This model requires 56~MB for parameters whereas its training with stochastic gradient descent and batch size of 100 requires 2~GB for training data and buffers, corresponding to 97\% of the total memory requirement. This is known as \emph{memory wall} problem \citep{Liao2021}. 
On the other hand, it is intensive in energy consumption due to large amount of floating-point multiplications. On top of that, not only one training process is repeated for a large number (millions) of forward-backpropagation iterations, but also the number of training experiments increases exponentially with the number of hyper-parameters for tuning. \citep{Strubell2019} showed that the CO2 footprint of training a natural language processing model is above 2000 times more than that of the inference. 
%
This problem has motivated a large body of literature that is summarized in \sref{sec:Related}. In particular, there has been motivations to develop \emph{binary neural networks} in which all parameters and/or data are binary numbers requiring only 1 bit. That would bring multiple benefits, notably significant reductions of memory footprint and of data movement and compute energy consumption. This line of research can be summarized into two approaches.

\begin{figure}[!t]
	\centering
	\includegraphics[width=0.85\columnwidth]{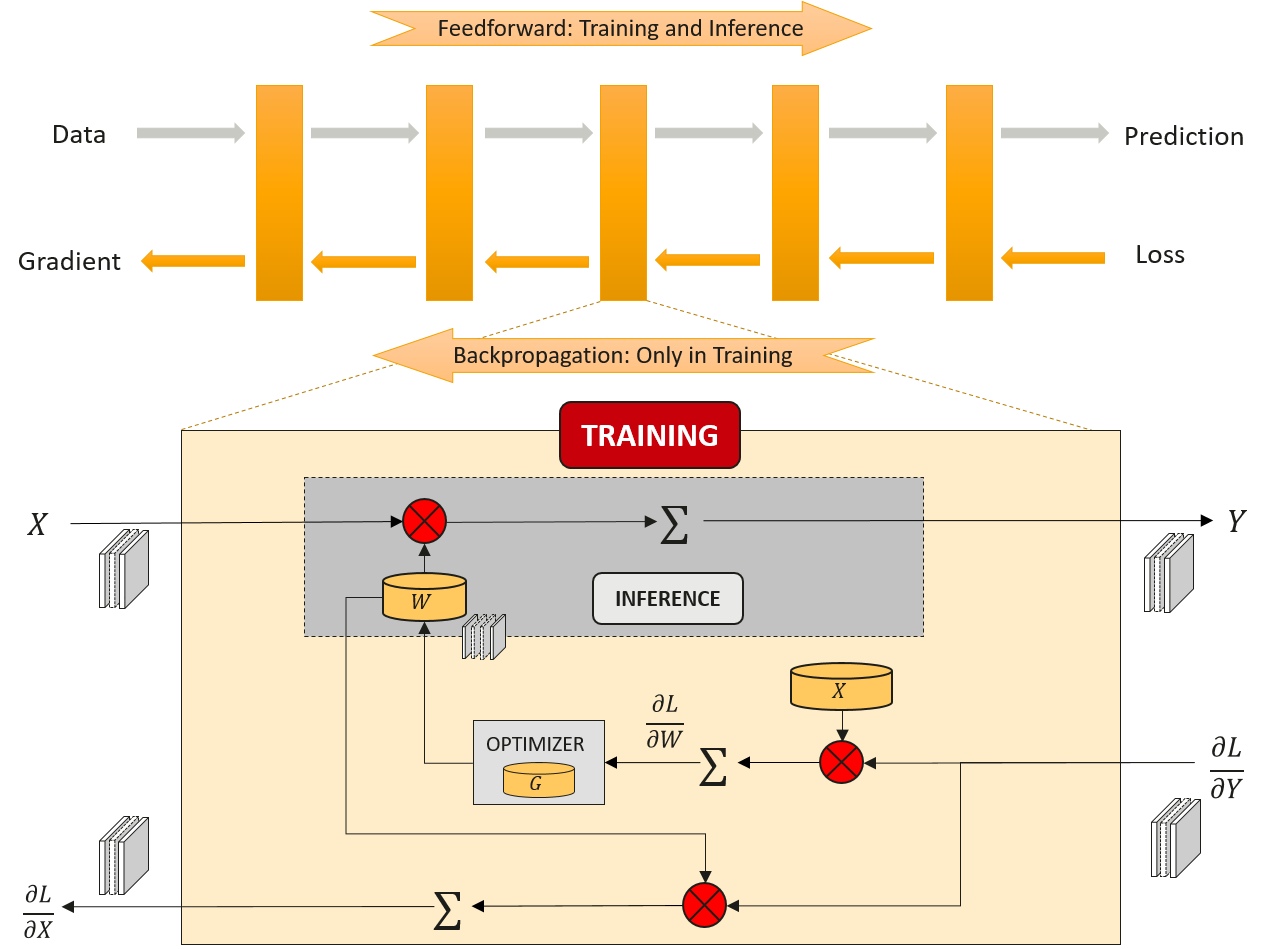}
	\caption{Overview of inference and training processes.}
	\label{fig:Intro_train_infer}
\end{figure}

\begin{figure}[!t]
	\centering
	\includegraphics[width=0.5\columnwidth]{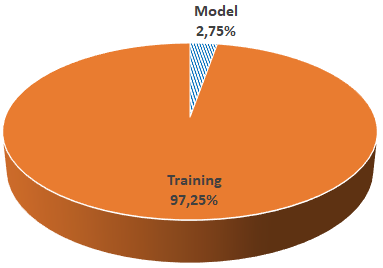}
	\caption[Example of training memory consumption]{Memory consumption of VGG-small model in 32-bit floating point and batch size of 100 samples for CIFAR10 classification.}
	\label{fig:Intro_Memories}
\end{figure}

The first one is called network binarization that is based on the existing gradient backpropagation for network training and aims at obtaining a binarized model for inference. This concept was proposed as BinaryConnect by \citep{Courbariaux2015}, Binarized neural networks (BNNs) by \citep{Hubara2016}, and XNOR-Net by \citep{Rastegari2016}. In BinaryConnect and BNNs, binary weights are obtained by sign extraction from the continuous ones during the continuous network training. In XNOR-Net, activations are real number and weights are binary. It also trains the continuous model by the standard gradient descent; then approximates the obtained continuous weights by binary weights with a scaling factor subjected to minimizing approximation error. Binarization approach in its core concept can perform relatively well in small tasks such as MNIST and CIFAR-10 classifications, but becomes hard to achieve comparable performance to the continuous model in more challenging tasks. Numerous architecture tweaks have been proposed, e.g., \cite{Bethge2019,Liu2018,Liu2020,zhang2022a,Guo2022}, aimed at bridging the gap to the continuous model in ImageNet classification. However, it is not clear how such architecture tweaks can be generalized to other computer vision tasks. On the other hand, network binarization is essentially a special case of quantization-aware training when the quantization function is the sign function. Its training is fully based on gradient backpropagation with continuous latent weights that are updated through the classic gradient descent. The derivative w.r.t. the quantized (or binarized) weights is not properly defined and needs to be approximated by a differential proxy. In \cite{Hubara2017}, the authors also showed that training a neural network with low-precision weights (even a single-layer one) is an NP-hard optimization problem. Due to weight discretization, the backpropagation algorithm used in continuous model training cannot be used effectively. The problem of unstable gradients appears as soon as the number of bits for representing numbers is less than 8. Through experiment study, \cite{Helwegen2019} observed that the role of latent weights in binarization process seems ignorable.

The second approach aims to train binary neural networks by another principle than the gradient backpropagation. 
To the be best of our knowledge, Expectation Backpropagation \citep{Soudry2014} was among the first attempts in this direction. It characterizes each weight by a probability distribution which is updated during the training. The network is then specified by sampling weights from the given distributions. Procedures along these lines operate on full-precision training. \citep{Baldassi2009} and \citep{Baldassi2015} proposed training algorithms inspired from statistical physics that keep and update an integer latent weight for every binary weight. It has been illustrated that these integer latent weights can also have bounded magnitude. This feature greatly reduces the complexity of operations. However, these algorithms are designed for specific types of networks and cannot be applied in a straightforward manner to DNNs and deep architectures. \citep{Baldassi2015a} provides an efficient algorithm based on belief propagation, but as in the previous works, this applies to a single perceptron and it is unclear how to use it for networks with multiple layers. An alternative approach to overcome the lack of gradient information is to address training as a combinatorial optimization problem and apply a known method for this kind of problems. Evolutionary Algorithms have been proposed as a training method for NNs in general by \citep{Morse2016}, and for low-precision DNNs by \citep{Ito2010}. However, evolutionary algorithms suffer from performance and scalability issues. First, although only low-precision weights are considered, the number of weights stored in memory is multiplied by the population size, and therefore, even for binary weights and a modest population size of 100, the memory footprint is larger than storing decimal weights with 16, 32 or 64 bits floating-point format. Second, a single forward pass during training has to be performed for all members of the population. Even though this can be done in parallel, having as many parallel processors as the population size is practically prohibitive. It is also well known that the population size should increase with the number of weights (dimension of the optimization parameter space), making scaling for big neural networks problematic.

The state of the art as described above shows that it appears possible to constraint deep neural networks to binary weights and/or activation, however how to reliably train binary deep models directly in binary domain remains an open question. This paper presents our attempt to provide a method to this question. Instead of formulating a kind of Boolean derivative such as one by \cite{Akers1959}, we introduce the notion of Boolean variation. Then, we develop a method by which one can synthesize logic rules for optimizing Boolean weights as well as ones to be backpropagated during the training phase.



\section{Related Works}\label{sec:Related}

Beside the most related works that are discussed in the introduction, this section provides a broader view of the literature that deals with the computational complexity of deep learning. It can be categorized into four approaches corresponding to four sources of complexity: large dimensions, intensive arithmetic, data bitwidth, and computing bottleneck.

Based on the fact that deep models are over-parameterized, low-rank factorization exploits this sparsity by means of tensor decomposition such as principal component analysis so as to only keep some most informative parts of the model. Several methods have been proposed in this direction known as compression and pruning \citep{Han2015,Cheng2020,Yang2017b}, or network design \citep{Sze2017,Howard2017,Tan2019}. Besides, knowledge distillation is also an important approach that trains a large model then uses it as a teacher to train a more compact one aimed at approximating or mimicking the function learned by the large model.

The second factor is arithmetic operations in which the most intensive are multiplications. Compared to additions, they are three time more expensive in 32-bit floating-point, and above 30 time more expensive in 32-bit integer \citep{Horowitz2014}. This factor becomes acute when considering large models. Arithmetic approximation proposes ways to tackle intensive multiplications \citep{Cong2014,Lav2020,Chen2020a}. For instance, several solutions exploit the computation structure of deep layers to reduce the number of multiplications, most notably Fast Fourier Transform, Winograd's algorithm transforms, and Strassen's algorithm-based approach \citep{Sze2020}.

Data bitwidth is the third factor that affects computational efficiency. Typically, all weights and signals are considered to be real numbers represented by 32-bit floating-point format. Quantization reduces the memory usage by reducing this number of bits to represent each parameter and activation, for instance to a 16-bit, or even 8-bit format, with acceptable degradation of the prediction accuracy. It has been observed \citep{Gholami2022} that quantization appears to be among the most efficient approaches regarding lower efforts and consistent gains it can achieve as compared to the other ones. Quantization methods can be classified into post-training quantization (i.e., quantized inference) \citep{Fei2022} that quantizes a full-precision trained model, quantization-aware training \citep{Gupta2015,Zhang2018,Jin2021,Yamamoto2021,Huang2021,Umuroglu2017}, and quantized training \citep{Chen2020,Sun2020,Yang2022a,Chmiel2021}. Compared to the continuous operation, cf. \fref{fig_Intro_FP_Process}, quantization-aware training, cf. \fref{fig_Intro_QuantInfer}, only quantizes the forward dataflows while keeping the training specific ones at full-precision. When the quantization function is the sign function, the resulting method becomes network binarization as described in the introduction. Quantized training in addition quantizes the backpropagated gradient. It however still keeps the weight gradient in full-precision for gradient descent optimizer, cf. \fref{fig_Intro_QuantTrain}.

\begin{figure}[!t]
	\centering
	\begin{subfigure}{.3\textwidth}
		\centering
		\includegraphics[width=.9\textwidth]{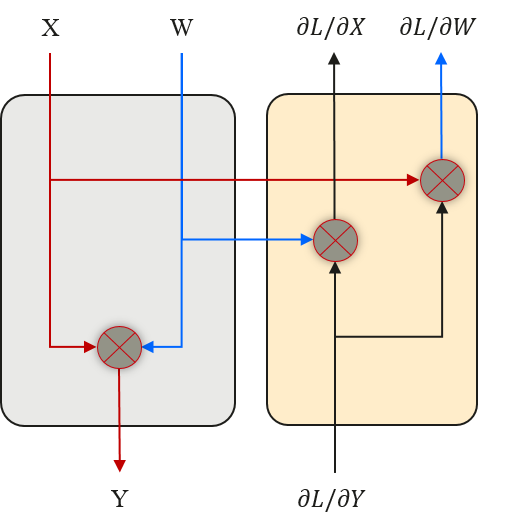}
		\subcaption{Full-precision.}
		\label{fig_Intro_FP_Process}
	\end{subfigure}%
	\begin{subfigure}{.3\textwidth}
		\centering
		\includegraphics[width=.9\textwidth]{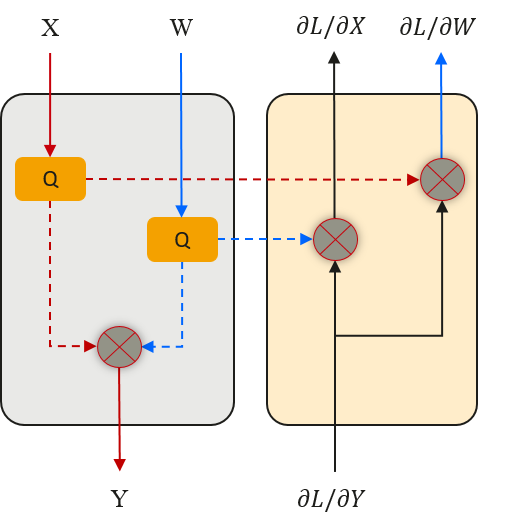}
		\subcaption{Quantization-aware training.}\label{fig_Intro_QuantInfer}
	\end{subfigure}%
	\begin{subfigure}{.3\textwidth}
		\centering
		\includegraphics[width=.9\textwidth]{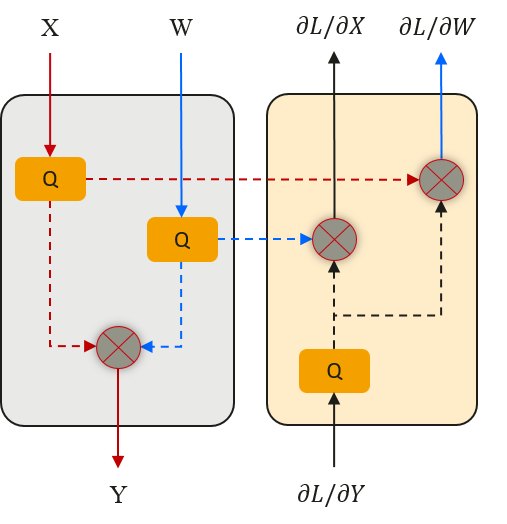}
		\subcaption{Quantized training.}
		\label{fig_Intro_QuantTrain}
	\end{subfigure}
	\caption{Data precision modalities during inference and training.}
	\label{fig:Intro_QuantMethods}
\end{figure}

\begin{figure}[!t]
	\centering
	\includegraphics[width=0.7\columnwidth]{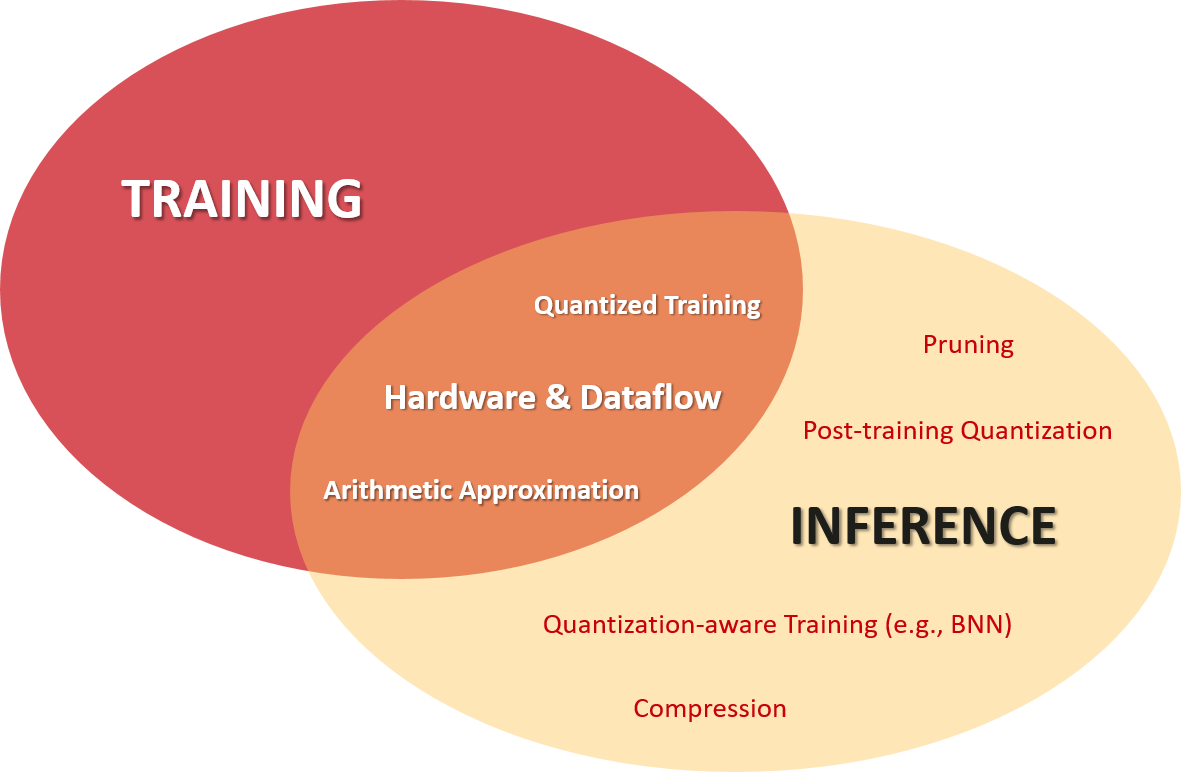}
	\caption{Overview of existing approaches and their benefits.}
	\label{fig:Intro_Sota}
\end{figure}

The above approaches have been mainly driven by the cost of arithmetic operations, i.e., aimed at reducing the number of floating-point operations (FLOPS) or per-operation cost. Even in the quantization approaches, although lowering data bitwidth is beneficial to both computation and data movement cost reduction (at the expense of performance loss), the design attention has been mainly attached to its effect on the computation. 
It has been revealed that the number of operations does not directly map to the actual computing hardware \citep{Sze2017,Sze2020a,Rutter2001,Yang2017a,Strubell2019}. Instead, the energy consumption is a prime measure of computing efficiency. Besides, memory footprint, latency, and silicon area are important factors of consideration. 
\citep{GarciaMartin2019} advocates that a reason of this lack of consideration by the machine learning community is due to possible unfamiliarity with energy consumption estimation as well as the lack of power models in the existing deep learning platforms such as TensorFlow, PyTorch, and MindSpore. Actually, energy consumption is due to not only computation but also data movement, which is often the dominant part. It is strictly tied to the system specification and architecture, requiring specific knowledge of computing systems and making it very hard to estimate or predict at a high level of precision. 
It also means that system architecture, memory hierarchy, data throughput, and dataflow constitute the forth factor of deep models' computational efficiency. 
On one hand, chip design has to trade off between the need of increasing near compute cache capacity, silicon area limitation, and energy consumption. In-memory or near-memory computing technologies \citep{Grimm2022,Verma2019} have been proposed that consist in performing the compute inside or nearest to the memory \citep{Sebastian2020,Alnatsheh2023}, hence reducing the data movement cost. New hardware technologies \citep{Yu2016,Yang2022} have been also investigated that increase the data movement efficiency. 
On the other hand, due to excessive large dimensions of deep models, off-chip memory remains mandatory. This comes the von Neumann bottleneck \citep{Zhang2022} that refers to an exponential increase of the energy cost of moving data between off-chip memory and the compute unit. In consequence, dataflow design \citep{Kwon2019} is a very important research area which aims at maximizing the data reuse so as to minimize the data movement cost. It is indeed a NP-hard problem \citep{Yang2020} in particular for deep layers that exhibit high reuse patterns such as convolution layers. Stationary dataflows have been a prominent strategy that proposes keeping one or several dataflows stationary in or near the compute unit. Most practical designs are input stationary, output stationary, weight stationary, and row stationary \citep{Chen2016a}.

Despite a lot of effort that has been given to deal with the computational intensiveness of deep models, except hardware and dataflow design that provides accelerated processing, most attention has been given to the inference complexity, for instance model compression, quantized inference, and quantization-aware training, cf. \fref{fig:Intro_Sota}.


\section{Design Insight and Method Introduction}\label{sec:Insight}

Boolean neuron is considered as a Boolean function. 
Besides the well-known non-linearity requirement, are there any other properties that Boolean neuron function should satisfy for the network learning capability? Let us take an example using Reed-Muller codes (RMC) \citep{MacWilliams1977} as the neuron function. For instance, as the $2^\textrm{nd}$ order RMC of size $m \geq 2$, the neuron is formulated as: $y = w_0 + \sum_{i=1}^m w_i x_i + \sum_{1 \leq i < j \leq m} w_{ij} x_i x_j$, where output $y$, weights $(w_i)$, $(w_{i,j})$, and inputs $(x_i)$ are binary, and the sums are in the binary field, i.e., modulo 2. No additional activation function is used since this model is non-linear of the inputs, and the weights are trained by the decoding mechanism provided by this code. Unfortunately, networks built on this model do not generalize in our experiments. Analyzing the reason behind led to an intuition that the neuron function instead should be \emph{elastic} in the sense that it should possess both responsive and absorptive regions in order to discriminate classes while being able to contract the points of the same class, which is not the case of this model. It implies that at some point inside the neuron's input-output transformation, the neuron needs to be embedded in a richer domain such as integer or real. 

Based on that understanding, the design proposed in the following is an effort to conciliate this intuition and the motivation of keeping the neuron in binary. Let $(w_0, w_1, w_2, \ldots)$ and $(x_1, x_2, \ldots)$ be Boolean numbers, and $\Bm$ be a Boolean logic operator such as AND, OR, XOR, XNOR which outputs 0 in place of False, and 1 in place of True. 
Define: 
\begin{equation}\label{eq:Preactivation}
	s = w_0 + \sum_{i} \Bm(w_i, x_i),
\end{equation}
where the sums are in the integer domain, which can be implemented in circuit using POPCOUNT. Let $\tau$ be a scalar, define:
\begin{equation}\label{eq:Activation}
	y = \begin{cases}
		\True,  & \textrm{if } s \geq \tau \\
		\False, & \textrm{if } s < \tau
	\end{cases}.
\end{equation}
This describes the neuron that we want to have for which $(w_i)$ and $(x_i)$ are Boolean weights and Boolean inputs, $\tau$ is threshold, 
and $s$ is the pre-activation. This is indeed a restricted case of the McCulloch–Pitts neuron in which the weights are now constrained to Boolean numbers. Specifically, it also takes advantage of the Logic Threshold Gate (LTG) proposed by \citep{McCulloch1943} which includes both summation and decision such that ``the decision can be made within the gate itself so that a pure binary output is produced'' \citep{Hampel1971}, hence no need of storing the integer pre-activation $s$ during the inference. LTG today has hardware implementation in standard CMOS logic \citep{Maan2017}. On the other hand, the step activation function used in \eref{eq:Activation} contains one responsive point and two absorptive regions at the two sides of the threshold. This model therefore satisfies our initial design target. 
Historically, McCulloch–Pitts neuron has been enriched until the use of differentiable activation function and continuous variables which allow the direct employment of the gradient descent training, leading to both the success and computational intensiveness of today's deep learning. Instead, we restrict it towards fully Boolean and propose in the following a method to train deep models of this Boolean neuron directly in the Boolean domain.

Instead of confronting a combinatorial problem of Boolean network training, we ask the question differently: as each weight only admits two values, should we keep or invert its current value? If we can construct a rule for making such a decision, the problem can be solved. Let us take a toy example. Consider a XOR neuron, i.e., $\mathrm{B} = \xor$. If we are given a signal $z$ that is back-propagated from the down-stream and indicates how the conditional/residual loss function at this stage varies with the neuron's forward output, then the rule can be constructed as follows. Suppose that the received signal $z$ indicates that the residual loss function increases with $s$. For weight $w_i$, we check its current value $\xor(w_i,x_i)$. If $\xor(w_i,x_i)$ is currently 1, it suffices to invert $w_i$ so that the new weight $w_i$ results in $\xor(w_i, x_i) = 0$ and makes $s$ decreased. Hence, the first part of the method is that: 
\begin{itemize}
	\item [Q1)] If given the backpropagation signal that expresses how the residual loss function varies w.r.t. the neuron's forward output, \emph{which weights to invert?}
\end{itemize}

To solve the above question, backpropagation signal $z$ needs to be given. In the case that neuron's downstream components provide the usual loss derivative, then $z$ is given and completely expresses the expected information. In the other case where neuron's downstream components do not have loss derivative, the second question to be solved is: 
\begin{itemize}
	\item[Q2)] \emph{What is the backpropagation signal of Boolean neurons?}
\end{itemize}

\if False
	For instance, when the loss derivative function, this condition is fulfilled. If the derivative is not available, we need to compute $z$, but let us skip it now.  $z = \True$ ($z = \False$, resp.) indicates that the loss increases (decreases, resp.) when $y$ changes from $\False$ to $\True$. 
	
	Suppose that $z = \True$, to decrease the loss we should make $y$ change from $\True$ to $\False$, meaning that we should decrease its pre-activation $s$. 
	
	Take for example $\mathrm{B} = \xor$. For weight $w_i$, we check its current value $\xor(w_i,x_i)$. If $\xor(w_i,x_i)$ is currently 1, it suffices to invert $w_i$ so that the new weight $w_i$ results in $\xor(w_i, x_i) = 0$. 
	The idea here is to establish variation relations of Boolean variables directly and synthesize logic rules to decide whether to invert Boolean weights or not and compute logic signal to be propagated to the up-stream.

	Related to the above question, there are cases for consideration. The first is where layer $(l+1)$ is a usual real-valued layer that back-propagates to layer $l$ the usual derivative of the loss w.r.t. the layer $l$ forward output. In this case, the backpropagation signal to layer $l$ satisfies the expected behavior and is given. The second case is where layer $(l+1)$ is a Boolean layer, its backpropagation signal is unknown, leading to the following second question to be solved: 
	\begin{itemize}
		\item[Q2)] How to compute the backpropagation signal of a Boolean neuron?
	\end{itemize}
\fi

\section{Boolean Variation}\label{sec:BoolVariation}

\subsection{Preliminary}\label{sec:Prelimimary}

Let $\Bb$ denote the set $\{\True, \False\}$ equipped with Boolean logic.

\begin{definition}\label{def:BoolOrder}
	Order relations `$<$' and `$>$' in $\Bb$ are defined as follows:
	\begin{equation}
		\False < \True, \quad \True > \False.
	\end{equation}
\end{definition}

\begin{definition}\label{def:BoolVariation}
	For $x, y \in \Bb$, define:
	\begin{equation}
		\bvar(x \to y) \deq \begin{cases} 
			\True, & \textrm{if } y > x,\\
			0, & \textrm{if } y = x,\\
			\False, & \textrm{if } y < x,
		\end{cases}
	\end{equation}
	which is called the variation of $x$ to $y$.
\end{definition}

\begin{definition}[Three-valued logic]\label{def:ThreeValueLogic}
	Define $\Mb = \Bb \cup \{0\}$ with logic connectives specified in \tref{tab:ThreeValueLogic}.
	{\renewcommand{\arraystretch}{1.2}
		\begin{table}[!h]
			\scalebox{0.8}{
			\begin{subtable}[c]{0.15\textwidth}
				\centering
				\vspace{0.5cm}
				\begin{tabular}{| c | c |}
					\hline
					$a$  	&  	$\neg a$ 	\\ \hline 
					$\True$	& 	$ \False$ 	\\ \hline 
					$0$  		& $0$ 		\\ \hline 
					$\False$	& $\True$ 	\\ \hline
				\end{tabular}
				\subcaption{Negation}
			\end{subtable}}%
			\if False
			\begin{subtable}[c]{\textwidth}
				\centering
				\begin{tabular}{| c | c |}
					\hline
					$a$  	&  	$\neg a$ 	\\ \hline 
					$\True$	& 	$ \False$ 	\\ \hline 
					$0$  		& $0$ 		\\ \hline 
					$\False$	& $\True$ 	\\ \hline
				\end{tabular}
				\subcaption{Negation}
			\end{subtable}
			\fi
			\scalebox{0.8}{
			\begin{subtable}{0.25\textwidth}
				\centering
				\vspace{0.5cm}
				\begin{tabular}{| c | c | c | c | c |}
					\hline
					\multicolumn{2}{|c|}{\multirow{2}{*}{$\andd$}} & \multicolumn{3}{c|}{$b$} 	\\ \cline{3-5} 
					\multicolumn{2}{|c|}{}				& $\True$	& $0$	& $\False$ 	\\ \hline
					\multirow{3}{*}{$a$}	& $\True$	& $\True$ 	& $0$	& $\False$	\\ \cline{2-5}
					& $0$		& $0$  		& $0$	& $0$		\\ \cline{2-5}
					& $\False$	& $\False$  & $0$	& $\False$	\\ \hline
				\end{tabular}
				\subcaption{AND}
			\end{subtable}}%
			\scalebox{0.8}{
			\begin{subtable}{0.25\textwidth}
				\centering
				\vspace{0.5cm}
				\begin{tabular}{| c | c | c | c | c |}
					\hline
					\multicolumn{2}{|c|}{\multirow{2}{*}{$\orr$}} & \multicolumn{3}{c|}{$b$} 	\\ \cline{3-5} 
					\multicolumn{2}{|c|}{}				& $\True$	& $0$	& $\False$ 	\\ \hline
					\multirow{3}{*}{$a$}	& $\True$	& $\True$ 	& $0$	& $\True$	\\ \cline{2-5}
					& $0$		& $0$  		& $0$	& $0$		\\ \cline{2-5}
					& $\False$	& $\True$  	& $0$	& $\False$	\\ \hline
				\end{tabular}
				\subcaption{OR}			
			\end{subtable}}%
			\scalebox{0.8}{
				\begin{subtable}{0.25\textwidth}
				\centering
				\vspace{0.5cm}
				\begin{tabular}{| c | c | c | c | c |}
					\hline
					\multicolumn{2}{|c|}{\multirow{2}{*}{$\xor$}} & \multicolumn{3}{c|}{$b$} 	\\ \cline{3-5} 
					\multicolumn{2}{|c|}{}				& $\True$	& $0$	& $\False$ 	\\ \hline
					\multirow{3}{*}{$a$}	& $\True$	& $\False$ 	& $0$	& $\True$	\\ \cline{2-5}
					& $0$		& $0$  		& $0$	& $0$		\\ \cline{2-5}
					& $\False$	& $\True$  	& $0$	& $\False$	\\ \hline
				\end{tabular}
				\subcaption{XOR}			
			\end{subtable}}%
			\scalebox{0.8}{
			\begin{subtable}{0.25\textwidth}
				\centering
				\vspace{0.5cm}
				\begin{tabular}{| c | c | c | c | c |}
					\hline
					\multicolumn{2}{|c|}{\multirow{2}{*}{$\xnor$}} & \multicolumn{3}{c|}{$b$} 	\\ \cline{3-5} 
					\multicolumn{2}{|c|}{}				& $\True$	& $0$	& $\False$ 	\\ \hline
					\multirow{3}{*}{$a$}	& $\True$	& $\True$ 	& $0$	& $\False$	\\ \cline{2-5}
					& $0$		& $0$  		& $0$	& $0$		\\ \cline{2-5}
					& $\False$	& $\False$ 	& $0$	& $\True$	\\ \hline
				\end{tabular}
				\subcaption{XNOR}
			\end{subtable}}
			\caption{Logic connectives of the three-value logic $\Mb$.}
			\label{tab:ThreeValueLogic}
	\end{table}}
\end{definition}

$\Mb$ is an extension of the Boolean logic by adding element $0$ whose meaning is ``ignored'', and the connection of its logic connectives to those of Boolean logic can be expressed as follows. Let $l$ be a logic connective, denote by $\l_{\Mb}$ and $\l_{\Bb}$ when it is applied in $\Mb$ and in $\Bb$, respectively. Then, what defined \defref{def:ThreeValueLogic} can be expressed as follows:
\begin{equation}
	l_{\Mb}(a,b) = \begin{cases}
		0, & \textrm{if } a = 0 \textrm{ or } b = 0,\\
		l_{\Bb}(a,b), & \textrm{otherwise}.
	\end{cases}
\end{equation}

In the sequel, the following notations are used:
\begin{itemize}
	\item $\Rb$: the real set; $\Db$: a discrete set; $\Nb$: a numeric set, e.g., $\Rb$ or $\Db$; 
	\item $\Lb$: a logic set, e.g., $\Bb$ or $\Mb$.
	\item $\Sb$: a certain domain that can be any of $\Lb$ or $\Nb$.
\end{itemize}

\begin{definition}
	The magnitude of a variable $x$, denoted $|x|$, is defined as follows:
	\begin{itemize}
		\item $x \in \Bb$: $|x| = 1$.
		\item $x \in \Mb$: $|x| = 0$ if $x = 0$, and $|x| = 1$ otherwise.
		\item $x \in \Nb$: $|x|$ is defined as its usual absolute value.
	\end{itemize}
\end{definition}

\begin{definition}[Type conversion]\label{def:Conversion}
	Define:
	\begin{align}
		\proj \colon & \Nb \to \Lb \nonumber\\
		& x \mapsto x_L = \proj(x) = \begin{cases}
			\True, & \textrm{if } x > 0,\\
			0, & \textrm{if } x = 0, \\
			\False, & \textrm{if } x < 0.
		\end{cases}\label{eq:Projector}
	\end{align}
	\begin{align}
		\emb \colon & \Lb \to \Nb \nonumber\\
		& a \mapsto a_N = \emb(a) = \begin{cases}
			+1, & \textrm{if } a = \True,\\
			0, & \textrm{if } a = 0, \\
			-1, & \textrm{if } a = \False.
		\end{cases}\label{eq:Embedding}
	\end{align}
\end{definition}
$\proj$ projects a numeric type in logic, and $\emb$ embeds a logic type in numeric. It is clear that for $x, y \in \Nb$: 
\begin{equation}
	x = y \Leftrightarrow |x| = |y| \textrm{ and } \proj(x) = \proj(y).
\end{equation}
\if False
	\begin{definition}
		The logic value of a variable $x$, denoted $x_L$, is defined as follows:
		\begin{itemize}
			\item For $x \in \Lb$: $x_L = x$.
			\item For $x \in \Nb$: $x_L \in \Mb$ st. $x_L = \True$ if $x > 0$, $x_L = 0$ if $x = 0$, and $x_L = \False$ if $x < 0$.
		\end{itemize}
	\end{definition}
\fi

\begin{definition}[Mixed-type logic]\label{def:MixedLogic}
	For $l$ a logic connective of $\Lb$ and two variables $a$, $b$, operation $c = l(a, b)$ is defined such that $|c| = |a||b|$ and $c_L = l(a_L, b_L)$. 
\end{definition}

\begin{proposition}\label{prop:XNORAlgebra}
	The following properties hold:
	\begin{enumerate}
		\item $a \in \Lb$, $x \in \Nb$: 
		\begin{equation}
			\xnor(a,x) = \begin{cases} x, & \textrm{if } a = \True,\\
				0, & \textrm{if } a = 0,\\
				-x, & \textrm{if } a = \False.
				\end{cases}
		\end{equation}
		\item $x, y \in \Nb$: $\xnor(x,y) = xy$. 
		\item $x \in \{\Lb, \Nb\}$, $y, z \in \Nb$: $\xnor(x,y+z) = \xnor(x,y) + \xnor(x,z)$. 
		\item $x \in \{\Lb, \Nb\}$, $y, \lambda \in \Nb$: $\xnor(x, \lambda y) = \lambda \xnor(x,y)$.
		\item $x \in \{\Lb, \Nb\}$, $y \in \Nb$: $\xor(x,y) = -\xnor(x,y)$.
	\end{enumerate}
\end{proposition}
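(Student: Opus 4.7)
The plan is to unpack the mixed-type logic definition (\defref{def:MixedLogic}) together with the projection and embedding maps of \defref{def:Conversion}, reducing every claim to a short case analysis. Throughout, for $c = \xnor(a,b)$ with operands in $\Lb \cup \Nb$, I will use that $|c| = |a||b|$ and $c_L = \xnor(a_L,b_L)$, where the latter XNOR is the three-valued connective of \tref{tab:ThreeValueLogic}, together with the identification $u = v \Leftrightarrow |u|=|v|$ and $\proj(u) = \proj(v)$ stated just above.

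For part 1, $a \in \Lb$ gives $|a| \in \{0,1\}$ and $a_L = a$, while $x \in \Nb$ gives $|x|$ as its usual absolute value and $x_L = \proj(x)$. If $a = 0$, then $|c| = 0$, hence $c = 0$. If $a = \True$, then $|c| = |x|$ and $c_L = \xnor(\True,\proj(x)) = \proj(x)$ by the XNOR table, so $c = x$. If $a = \False$, then $|c| = |x|$ and $c_L = \xnor(\False,\proj(x))$ is the Boolean negation of $\proj(x)$, yielding $c = -x$. For part 2, both operands are numeric: $|c| = |x||y|$ and $c_L = \xnor(\proj(x),\proj(y))$. If either factor is zero, both $|c|$ and $c_L$ vanish and $c = 0 = xy$; otherwise the XNOR table gives $c_L = \True$ exactly when $\proj(x) = \proj(y)$, i.e., exactly when $xy > 0$, so the signs agree, and the magnitudes agree by construction, giving $c = xy$.

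For parts 3 and 4, I will split on $x$. If $x \in \Nb$, part 2 turns every instance of $\xnor$ into ordinary multiplication, and the identities collapse to $x(y+z) = xy + xz$ and $x(\lambda y) = \lambda (xy)$. If $x \in \Lb$, part 1 shows that the operator $y \mapsto \xnor(x,y)$ is the identity when $x = \True$, the negation $y \mapsto -y$ when $x = \False$, and the zero map when $x = 0$; each of these three maps is $\Rb$-linear, which immediately yields both the additivity and the homogeneity claims. Part 5 is a direct comparison of the XNOR and XOR tables of \tref{tab:ThreeValueLogic}: on every entry where the output is nonzero, the XOR value is the Boolean negation of the XNOR value, while the $0$-rows and $0$-columns coincide; combining this with part 1 (when $x \in \Lb$) or part 2 (when $x \in \Nb$) gives $\xor(x,y) = -\xnor(x,y)$, since negation in $\Mb$ corresponds to sign flip under the embedding $\emb$.

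The only subtlety I expect is the zero-handling in parts 2 and 3: when $\proj(x) = 0$ but $x+z$ or $\lambda y$ is nonzero, one must verify that the magnitude condition $|c| = |a||b|$ forces the whole expression to vanish consistently on both sides, which is where the joint $(|\cdot|, \proj(\cdot))$ characterization of equality does the real work. Beyond that bookkeeping, each item is a finite table check.
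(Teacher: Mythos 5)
Your proposal is correct and follows essentially the same route as the paper's proof: reduce each item to the magnitude--projection characterization of equality from Definition~\ref{def:MixedLogic}, case-split on the logical operand via the XNOR table for parts 1 and 2, derive parts 3 and 4 from part 1 when $x \in \Lb$ and from part 2 when $x \in \Nb$, and obtain part 5 from $\xor = \neg\xnor$ on the logical component with equal magnitudes. Your treatment of parts 3--5 is somewhat more explicit than the paper's, but the underlying argument is the same.
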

\begin{proof} 
	The proof follows definitions \ref{def:MixedLogic} and \ref{def:Conversion}.
	\begin{itemize}
		\item Following \defref{def:ThreeValueLogic} we have $\forall t \in \Mb$, $\xnor(\True, t) = t$, $\xnor(\False, t) = \neg t$, and $\xnor(0, t) = 0$. Put $v = \xnor(a, x)$. We have $|v| = |x|$ and $v_L = \xnor(a, x_L)$. Hence, $a = 0 \Rightarrow v_L = 0 \Rightarrow v = 0$; $a = \True \Rightarrow v_L = x_L \Rightarrow v = x$; $a = \False \Rightarrow v_L = \neg x_L \Rightarrow v = -x$. Hence (1).
		\item The result is trivial if $x=0$ or $y=0$. For $x, y \neq 0$, put $v = \xnor(x,y)$, we have $|v| = |x||y|$ and $v_L = \xnor(x_L, y_L)$. According to \defref{def:Conversion}, if $\sign(x)=\sign(y)$, we have $v_L = \True \Rightarrow v = |x||y| = xy$. Otherwise, i.e., $\sign(x)=-\sign(y)$, $v_L = \False \Rightarrow v = -|x||y| = xy$. Hence (2).
		\item (3) and (4) follow (1) for $x \in \Lb$ and follow (2) for $x \in \Nb$.
		\item For (5), write $u = \xor(x,y)$ and $v = \xnor(x,y)$, we have $|u| = |v|$ and $u_L = \xor(x_L, y_L) = \neg \xnor(x_L, y_L) = \neg v_L$. Thus, $\sign(u) = -\sign(v) \Rightarrow u = -v$. \qedhere
	\end{itemize}
\end{proof}

\subsection{Variation Calculus}

\begin{definition}\label{def:BoolFuncVar}
	For $f \in \Fc(\Bb, \Bb)$, the variation of $f$ w.r.t. its variable, denoted $f'$, is defined as follows:
	\begin{align*}
		f' \colon & \Bb \to \Mb \\
		& x \mapsto \xnor(\bvar(x \to \neg x), \bvar f(x \to \neg x)). 
	\end{align*}
\end{definition}
Intuitively, the variation of $f$ is True if $f$ varies in the same direction with $x$.

\begin{notation}
	We use notation $\bvar f(x)/ \bvar x$ to denote the variation of a function $f$ w.r.t. a variable $x$, i.e.,
	\begin{equation}
		\frac{\bvar f(x)}{\bvar x} := f'(x).
	\end{equation}
\end{notation}

\begin{example}
	For $f = \xor(a, b)$ with $a, b \in \Bb$, the variation of $f$ w.r.t. each variable according to \defref{def:BoolFuncVar} can be derived by establishing a truth table as in \tref{tab:VariationXOR} from which we obtain:
	\begin{equation}
		f'_a(b) = \neg a.
	\end{equation}
	{\renewcommand{\arraystretch}{1.0}
		\begin{table}[!h]
			\centering
			\begin{tabular}{cccccccc}
				\toprule 
				\multirow{2}{*}{$a$} 
				& \multirow{2}{*}{$b$} 
				& \multirow{2}{*}{$\neg b$} 
				& \multirow{2}{*}{$\bvar(b \to \neg b)$} 
				& \multirow{2}{*}{$f(a,b)$}
				& \multirow{2}{*}{$f(a,\neg b)$}
				& \multirow{2}{*}{$\bvar f_a(b \to \neg b)$} 
				& \multirow{2}{*}{$f'_a(b)$}   \\
				& & & & & \\
				\midrule 
				$\True$  &  $\True$  &  $\False$  & $\False$  &  $\False$ 	& $\True$	& $\True$  & $\False$  \\
				
				$\True$  &  $\False$ &  $\True$   & $\True$   &  $\True$	& $\False$ 	& $\False$ & $\False$  \\
				
				$\False$ &  $\True$  &  $\False$  & $\False$  &  $\True$	& $\False$ 	& $\False$ & $\True$   \\
				
				$\False$ &  $\False$ &  $\True$   & $\True$   &  $\False$ 	& $\True$ 	& $\True$  & $\True$   \\
				\bottomrule
			\end{tabular}
			\caption{Variation truth table of $\xor$.}
			\label{tab:VariationXOR}
		\end{table}
	}
\end{example}

\begin{proposition}\label{prop:B2BVariation}
	For $f, g \in \Fc(\Bb, \Bb)$, $\forall x, y \in \Bb$ the following properties hold:
	\begin{enumerate}
		\item $\bvar f(x \to y) = \xnor(\bvar(x \to y), f'(x)).$
		\item $(\neg f)'(x) = \neg f'(x)$.
		\item $(g \circ f)'(x) = \xnor(g'(f(x)), f'(x))$.
	\end{enumerate}
\end{proposition}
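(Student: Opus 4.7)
The three identities should all fall out from a direct case analysis together with basic algebraic properties of $\xnor$ on the three-valued set $\Mb$: commutativity, associativity, the absorbing nature of $0$, and the pointwise identities $\xnor(a,a) = \True$ for $a \in \Bb$, $\xnor(\True, b) = b$, and $\xnor(a, \neg b) = \neg \xnor(a, b)$ — all read directly off \tref{tab:ThreeValueLogic}. The single reusable observation is that for $x \in \Bb$ the transition $x \to \neg x$ is never stationary, so $\bvar(x \to \neg x) \in \Bb$ and acts as a genuine Boolean (never $0$) in any $\xnor$ that contains it.

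For (1), I would split on whether $y = x$ or $y = \neg x$. The equal case is trivial: both sides collapse to $0$ (the left by \defref{def:BoolVariation}, the right by absorption). When $y = \neg x$, I unfold $f'(x)$ per \defref{def:BoolFuncVar}, so the right-hand side becomes
\begin{equation*}
\xnor(\bvar(x \to \neg x), \xnor(\bvar(x \to \neg x), \bvar f(x \to \neg x))).
\end{equation*}
Since $\bvar(x \to \neg x) \in \Bb$, the involution $\xnor(a, \xnor(a, b)) = b$ reduces this to $\bvar f(x \to \neg x) = \bvar f(x \to y)$.

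For (2), I would first verify $\bvar (\neg f)(x \to \neg x) = \neg\, \bvar f(x \to \neg x)$ by a three-case inspection on the ordering of $f(x)$ and $f(\neg x)$; each case just flips $\True$ with $\False$ or leaves $0$ fixed. Substituting into $(\neg f)'(x) = \xnor(\bvar(x \to \neg x), \bvar(\neg f)(x \to \neg x))$ and then pulling the negation outside via $\xnor(a, \neg b) = \neg \xnor(a, b)$ yields $\neg f'(x)$ as desired.

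For (3), I would apply (1) to $g$ at the transition $f(x) \to f(\neg x)$, obtaining $\bvar g(f(x) \to f(\neg x)) = \xnor(\bvar(f(x) \to f(\neg x)), g'(f(x)))$. Since $\bvar(g\circ f)(x \to \neg x)$ is by definition $\bvar(g(f(x)) \to g(f(\neg x)))$ and $\bvar(f(x) \to f(\neg x)) = \bvar f(x \to \neg x)$, substitution into the definition of $(g\circ f)'(x)$ produces
\begin{equation*}
\xnor\!\bigl(\bvar(x \to \neg x),\ \xnor(\bvar f(x \to \neg x),\ g'(f(x)))\bigr),
\end{equation*}
which after one regrouping becomes $\xnor(g'(f(x)), \xnor(\bvar(x \to \neg x), \bvar f(x \to \neg x))) = \xnor(g'(f(x)), f'(x))$. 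The main obstacle I expect is exactly this regrouping, since it requires associativity of $\xnor$ on $\Mb$ rather than just on $\Bb$. It is still easy — if any operand is $0$ both groupings return $0$ by absorption, otherwise all operands lie in $\Bb$ where associativity is classical — but it is worth flagging explicitly so that the collapses used in (1) and (3) are on solid footing.
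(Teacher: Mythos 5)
Your proof is correct and follows essentially the same route as the paper's: the $y=x$ versus $y=\neg x$ case split plus the $\xnor$ involution for (1), the pointwise identity $\bvar(\neg f)(x\to\neg x)=\neg\,\bvar f(x\to\neg x)$ with the negation pulled through $\xnor$ for (2), and an application of property (1) to $g$ at the transition $f(x)\to f(\neg x)$ followed by an associativity regrouping for (3). Your explicit remark that associativity of $\xnor$ must hold on all of $\Mb$ (settled by absorption when any operand is $0$) is a detail the paper leaves implicit, but it does not change the argument.
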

\begin{proof} The proof is by definition:
	\begin{enumerate}
		\item $\forall x, y \in \Bb$, there are two cases. If $y = x$, then the result is trivial. Otherwise, i.e., $y = \neg x$, by definition we have:
		\begin{align*}
			f'(x) & = \xnor(\bvar(x \to \neg x), \bvar f(x \to \neg x)) \\
			\Leftrightarrow \quad \bvar f(x \to \neg x) & = \xnor(\bvar(x \to \neg x), f'(x)).
		\end{align*}
		Hence the result.
		\item $\forall x, y \in \Bb$, it is easy to verify by truth table that $\bvar(\neg f(x \to y)) = \neg \bvar{f(x \to y)}$. Hence, by definition,
		\begin{align*}
			(\neg f)'(x) & = \xnor(\bvar(x \to \neg x), \bvar(\neg f(x \to \neg x))) \\
			& = \xnor(\bvar(x \to \neg x), \neg \bvar f(x \to \neg x)) \\
			& = \neg \xnor(\bvar(x \to \neg x), \bvar f(x \to \neg x)) \\
			& = \neg f'(x).
		\end{align*}
		\item Using definition, property (i), and associativity of $\xnor$, $\forall x \in \Bb$ we have:
		\begin{align*}
			(g\circ f)'(x) & = \xnor(\bvar(x \to \neg x), \bvar g(f(x) \to f(\neg x))) \\
			& = \xnor\pren{\bvar(x \to \neg x), \xnor\pren{\bvar f(x \to \neg x), g'\pren{f(x)} }} \\
			& = \xnor\pren{g'(f(x)), \xnor\pren{\bvar(x \to \neg x), \bvar f(x \to \neg x) } } \\
			& = \xnor(g'(f(x)), f'(x) ).
		\end{align*} \qedhere
	\end{enumerate}
\end{proof}

\begin{definition}
	For $f \in \Fc(\Bb, \Nb)$, the variation of $f$ w.r.t. its variable, denoted $f'$, is defined as follows:
	\begin{align*}
		f' \colon & \Bb \to \Nb \\
		& x \mapsto \xnor(\bvar(x \to \neg x), \bvar f(x \to \neg x)).
	\end{align*}
\end{definition}

\begin{proposition}\label{prop:B2NVariation}
	For $f \in \Fc(\Bb, \Nb)$, the following properties hold:
	\begin{enumerate}
		\item $x, y \in \Bb$: $\bvar f(x \to y) = \xnor(\bvar(x \to y), f'(x))$.
		\item $x \in \Bb$, $\alpha \in \Nb$: $(\alpha f)'(x) = \alpha f'(x)$.
		\item $x \in \Bb$, $g \in \Fc(\Bb, \Nb)$: $(f + g)'(x) = f'(x) + g'(x)$.
	\end{enumerate}
\end{proposition}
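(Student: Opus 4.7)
The approach parallels \propref{prop:B2BVariation}, but leans on the mixed-type algebraic identities for $\xnor$ from \propref{prop:XNORAlgebra} rather than on pure Boolean algebra. Throughout, for a numeric-valued $f$ on $\Bb$, the quantity $\bvar f(x \to \neg x) = f(\neg x) - f(x)$ lies in $\Nb$, so the definition $f'(x) = \xnor\pren{\bvar(x \to \neg x),\, \bvar f(x \to \neg x)}$ uses the mixed logic--numeric $\xnor$ from \defref{def:MixedLogic}, whose concrete evaluation is given by \propref{prop:XNORAlgebra}(1).

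For property (1), I would split on whether $y = x$ or $y = \neg x$. The former case is immediate: $\bvar(x\to x) = 0$, and by \propref{prop:XNORAlgebra}(1) we have $\xnor(0,\cdot) = 0$, so both sides vanish. For the latter case, substituting the definition of $f'(x)$ into the right-hand side yields $\xnor\pren{\bvar(x\to\neg x),\, \xnor\pren{\bvar(x\to\neg x),\, \bvar f(x\to\neg x)}}$. The key fact, obtained from \propref{prop:XNORAlgebra}(1) by a direct two-case check on $a \in \Bb$, is that $\xnor(a, \xnor(a, v)) = v$ for any $a \in \Bb$ and $v \in \Nb$ (identity when $a = \True$; two sign-flips cancel when $a = \False$). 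Since $\bvar(x\to\neg x)$ is never $0$ in this case, the expression collapses to $\bvar f(x\to\neg x)$, as required.

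Properties (2) and (3) reduce to the linearity of $\xnor$ in its numeric argument. The identities $\bvar(\alpha f)(x\to\neg x) = \alpha\,\bvar f(x\to\neg x)$ and $\bvar(f+g)(x\to\neg x) = \bvar f(x\to\neg x) + \bvar g(x\to\neg x)$ follow from the definition of $\bvar$ on numeric outputs via ordinary arithmetic in $\Nb$. Then (2) follows by pulling the scalar out of $\xnor$ using \propref{prop:XNORAlgebra}(4), and (3) follows by distributing $\xnor$ across the sum using \propref{prop:XNORAlgebra}(3); in both cases the logic slot is occupied by $\bvar(x\to\neg x) \in \Bb$ and the numeric slot by an element of $\Nb$, so the hypotheses of those lemmas are met.

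The main obstacle, if any, is the double-$\xnor$ collapse used in (1): rather than invoking a generic associativity law for mixed-type three-valued logic (which has not been established in the paper), I would dispatch it by the explicit two-case argument above. Once this small lemma is isolated, the three properties unfold in a routine, essentially one-line manner each, and the structure of the proof mirrors that of \propref{prop:B2BVariation} almost step for step.
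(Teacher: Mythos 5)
Your proposal is correct and follows essentially the same route as the paper: the trivial case $y=x$, inversion of the defining relation for $y=\neg x$, and then \propref{prop:XNORAlgebra}(4) and (3) for the scalar and additivity properties respectively. The only cosmetic difference is in part (1), where the paper cancels the double $\xnor$ by separating magnitude ($|\bvar f|=|f'|$) from logic projection, while you verify the involution $\xnor(a,\xnor(a,v))=v$ directly by the two-case sign argument --- both rest on the same fact and are equally valid.
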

\begin{proof}
	The proof is as follows:
	\begin{enumerate}
	\item For $x, y \in \Bb$. Firstly, the result is trivial if $y = x$. For $y \neq x$, i.e., $y = \neg x$, by definition:
		\begin{equation*}
			f'(x) = \xnor(\bvar(x \to \neg x), \bvar f(x \to \neg x)).
		\end{equation*}
		Hence, $|\bvar f(x \to \neg x)| = |f'(x)|$ since $|\bvar(x \to \neg x)| = 1$, and
		\begin{align*}
			\proj(f'(x)) & = \xnor(\bvar(x \to \neg x), \proj(\bvar f(x \to \neg x)))\\
			\Leftrightarrow \quad \proj(\bvar f(x \to \neg x)) & = \xnor(\bvar(x \to \neg x), \proj(f'(x))),
		\end{align*}
		where $p$ is the logic projector \eref{eq:Projector}. Thus, $f(x \to \neg x) = \xnor(\bvar(x \to \neg x), f'(x))$. Hence the result.
	\item Firstly $\forall x, y \in \Bb$, we have
		\begin{equation*}
			\bvar(\alpha f(x \to y)) = \alpha f(y) - \alpha f(x) = \alpha \bvar{f(x \to y)}.
		\end{equation*}
		Hence, by definition,
		\begin{align*}
			(\alpha f)'(x) & = \xnor(\bvar(x \to \neg x), \bvar(\alpha f(x \to \neg x))) \\
			& = \xnor(\bvar(x \to \neg x), \alpha\bvar{f(x \to \neg x)}) \\
			& = \alpha \, \xnor(\bvar(x \to \neg x), \bvar{f(x \to \neg x)}), \textrm{ due to \propref{prop:XNORAlgebra}(4)}\\
			& = \alpha f'(x).
		\end{align*}
	\item For $f, g \in \Fc(\Bb, \Nb)$,
		\begin{align*}
			(f+g)'(x) & = \xnor(\bvar(x \to \neg x), \bvar(f+g)(x \to \neg x))\\
			& = \xnor(\bvar(x \to \neg x), \bvar f(x \to \neg x) + \bvar g(x \to \neg x)) \\
			& \overset{(*)}{=} \xnor(\bvar(x \to \neg x), \bvar f(x \to \neg x)) + \xnor(\bvar(x \to \neg x), \bvar g(x \to \neg x)),\\
			& = f'(x) + g'(x),
		\end{align*}
		where $(*)$ is due to \propref{prop:XNORAlgebra}(3). \qedhere
	\end{enumerate}
\end{proof}

For discrete-variable functions, recall that the discrete derivative of $f \in \Fc(\Db, \Nb)$ has been defined in the literature as $f'(n) = f(n+1) - f(n)$. With the logic variation as previously introduced in this section, we can have a more generic definition of the variation of a discrete-variable function as follows. 
\begin{definition}
	For $f \in \Fc(\Db, \Sb)$, the variation of $f$ w.r.t. its variable is defined as follows:
	\begin{align*}
		f' \colon & \Db \to \Sb \\
		& x \mapsto \bvar f(x \to x+1),
	\end{align*}
	where $\bvar f$ is in the sense of the variation defined in $\Sb$.
\end{definition}

\begin{proposition}[Composition rules]\label{prop:Composition}
	The following properties hold:
	\begin{enumerate}
		\item For $\Bb \overset{f}{\to} \Bb \overset{g}{\to} \Sb$: $(g \circ f)'(x) = \xnor(g'(f(x)), f'(x))$, $\forall x \in \Bb$.
		\item For $\Bb \overset{f}{\to} \Db \overset{g}{\to} \Sb$, $x \in \Bb$, if $|f'(x)| \leq 1$ and $g'(f(x)) =g'(f(x)-1)$, then:
		\begin{equation*}
			(g \circ f)'(x) = \xnor(g'(f(x)), f'(x)).
		\end{equation*}
	\end{enumerate}
\end{proposition}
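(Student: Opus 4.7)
The plan is to treat the two parts separately. Part (1) generalises \propref{prop:B2BVariation}(3) from codomain $\Bb$ to an arbitrary $\Sb$; part (2) requires a small case analysis whose key steps are pinned down precisely by the two stated hypotheses.

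For part (1), I would start from the definition of variation applied to $g \circ f \in \Fc(\Bb, \Sb)$,
\begin{equation*}
(g \circ f)'(x) = \xnor(\bvar(x \to \neg x), \bvar g(f(x) \to f(\neg x))).
\end{equation*}
Property (1) of \propref{prop:B2BVariation} gives $\bvar f(x \to \neg x) = \xnor(\bvar(x \to \neg x), f'(x))$, and the analogous identity valid for $\Bb \to \Sb$ variations (property (1) of \propref{prop:B2NVariation} when $\Sb = \Nb$) yields $\bvar g(f(x) \to f(\neg x)) = \xnor(\bvar f(x \to \neg x), g'(f(x)))$. Substituting and then invoking commutativity and associativity of $\xnor$ (\propref{prop:XNORAlgebra}) to pair the two copies of $\bvar(x \to \neg x)$, which $\xnor$-cancel to the identity, leaves $\xnor(g'(f(x)), f'(x))$.

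For part (2), $f$ now maps into the discrete set $\Db$ and $g$ is a discrete-variable function, so $g'(n) = \bvar g(n \to n+1)$ is defined only at unit steps. The hypothesis $|f'(x)| \leq 1$ forces $f(\neg x) - f(x) \in \{-1, 0, +1\}$, so the jump from $f(x)$ to $f(\neg x)$ is always unit-sized and the discrete variation of $g$ along it is well defined. I would then split on $f'(x) \in \{0, +1, -1\}$ and, in each nonzero case, on whether $x = \False$ or $x = \True$. The case $f'(x) = +1$, $x = \False$ gives $\bvar g(f(x) \to f(\neg x)) = g'(f(x))$ directly. The case $f'(x) = +1$, $x = \True$ gives $\bvar g(f(x) \to f(\neg x)) = -g'(f(x) - 1)$, and here the second hypothesis $g'(f(x)) = g'(f(x) - 1)$ rewrites this as $-g'(f(x))$, which then combines with $\bvar(x \to \neg x) = \False$ via \propref{prop:XNORAlgebra}(1) to restore $g'(f(x))$. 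The $f'(x) = -1$ cases are symmetric and $f'(x) = 0$ is immediate, since both sides vanish.

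The main obstacle is conceptual rather than computational: recognising that the hypothesis $g'(f(x)) = g'(f(x) - 1)$ is exactly the local symmetry of $g$ needed for the backward one-step variation $\bvar g(f(x) \to f(x) - 1) = -g'(f(x) - 1)$ to align with the forward chain-rule expression $\xnor(g'(f(x)), -1)$. Without this assumption the proposed equality would only hold when the $\Bb$-input moves in the direction that makes $f$ increase, and no reorganisation of $\xnor$ could repair the opposite case. Once this observation is in place, the remaining work is routine bookkeeping using \propref{prop:XNORAlgebra}.
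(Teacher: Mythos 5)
Your proof is correct and follows essentially the same route as the paper's: part (1) by substituting the first-order variation identities and cancelling the two copies of $\bvar(x \to \neg x)$ through associativity of $\xnor$, and part (2) by reducing to the unit step $f(\neg x) - f(x) = \pm 1$ and invoking the hypothesis $g'(f(x)) = g'(f(x)-1)$ exactly in the backward-step case. The only cosmetic difference is that you case-split on $f'(x)$ and the value of $x$ separately, whereas the paper cases directly on $\xnor(\bvar(x \to \neg x), f'(x)) = \pm 1$.
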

\begin{proof}
	The proof is as follows.
	\begin{enumerate}
		\item The case of $\Bb \overset{f}{\to} \Bb \overset{g}{\to} \Bb$ is obtained from \propref{prop:B2BVariation}(3). For $\Bb \overset{f}{\to} \Bb \overset{g}{\to} \Nb$, by using \propref{prop:B2NVariation}(1), the proof is similar to that of \propref{prop:B2BVariation}(3).
		\item By definition, we have
		\begin{equation}\label{eq:proofComposition2_1}
			(g \circ f)'(x) = \xnor(\bvar(x \to \neg x), \bvar g(f(x) \to f(\neg x))). 
		\end{equation} 
		Using property (1) of \propref{prop:B2NVariation}, we have:
		\begin{align}
			f(\neg x) & = f(x) + \bvar f(x \to \neg x) \nonumber\\
			& = f(x) + \xnor(\bvar(x \to \neg x), f'(x)). \label{eq:proofComposition2_2}
		\end{align}
		Applying \eref{eq:proofComposition2_2} back to \eref{eq:proofComposition2_1}, the result is trivial if $f'(x) = 0$. The remaining case is $|f'(x)| = 1$ for which we have $\xnor(\bvar(x \to \neg x), f'(x)) = \pm 1$. First, for $\xnor(\bvar(x \to \neg x), f'(x)) = 1$, we have:
		\begin{align}
			\bvar g(f(x) \to f(\neg x)) &= \bvar g(f(x) \to f(x) + 1) \nonumber\\
			& = g'(f(x)) \nonumber\\
			& = \xnor(g'(f(x)), 1) \nonumber\\
			& = \xnor(g'(f(x)), \xnor(\bvar(x \to \neg x), f'(x)) ) \label{eq:proofComposition2_3}.
		\end{align}
		Substitute \eref{eq:proofComposition2_3} back to \eref{eq:proofComposition2_1}, we obtain:
		\begin{align*}
			(g \circ f)'(x) & = \xnor(\bvar(x \to \neg x), \bvar g(f(x) \to f(\neg x))) \\ 
			& = \xnor(\bvar(x \to \neg x), \xnor(g'(f(x)), \xnor(\bvar(x \to \neg x), f'(x)) ) ) \\
			& = \xnor(g'(f(x)), f'(x)),
		\end{align*} 
		where that last equality is by the associativity of $\xnor$ and that $\xnor(x, x) = \True$ for $x \in \Bb$. 
		Similarly, for $\xnor(\bvar(x \to \neg x), f'(x)) = -1$, we have:
		\begin{align}
			\bvar g(f(x) \to f(\neg x)) &= \bvar g(f(x) \to f(x) - 1) \nonumber\\
			& = - g'(f(x)-1) \nonumber\\
			& = \xnor(g'(f(x)-1), -1) \nonumber\\
			& = \xnor(g'(f(x)-1), \xnor(\bvar(x \to \neg x), f'(x)) ) \label{eq:proofComposition2_4}.
		\end{align}
		Substitute \eref{eq:proofComposition2_4} back to \eref{eq:proofComposition2_1} and use the assumption that $g'(f(x)) = g'(f(x)-1)$, we have:
		\begin{align*}
			(g \circ f)'(x) & = \xnor(\bvar(x \to \neg x), \bvar g(f(x) \to f(\neg x))) \\ 
			& = \xnor(\bvar(x \to \neg x), \xnor(g'(f(x)-1), \xnor(\bvar(x \to \neg x), f'(x)) ) ) \\
			& = \xnor(g'(f(x)), f'(x)).
		\end{align*} 
		Hence the result. \qedhere
	\end{enumerate}
\end{proof}

The above formulation is extended to the multivariate case in a straightforward manner. In the following, we show one example with Boolean functions. 

\begin{definition}\label{def:BoolFuncVar0}
	For $\xb = (x_1, \ldots, x_n) \in \Bb^n$, $n \ge 1$, and $1 \leq i \leq n$, denote: 
	\begin{equation*}
		\xb_{\neg i} := (x_1, \ldots, x_{i-1}, \neg x_i, x_{i+1}, \ldots, x_n).
	\end{equation*}
	For $f \in \Fc(\Bb^n, \Bb)$, the (partial) variation of $f$ w.r.t. $x_i$, denoted $f'_{i}(\xb)$ and $\frac{\bvar f(\xb)}{\bvar x_i}$, is defined as follows:
	\begin{equation}
		f'_{i}(\xb) = \frac{\bvar f(\xb)}{\bvar x_i} \deq \xnor(\bvar(x_i \to \neg x_i), \bvar f(\xb \to \xb_{\neg i})).
	\end{equation}
\end{definition}

It is easy so show that $f'_{i}$ has the same properties as in \propref{prop:B2BVariation}. For instance, the composition is as follows.

\begin{proposition}
	For $f \in \Fc(\Bb^n, \Bb)$, $g \in \Fc(\Bb, \Bb)$, $1 \le i \le n$, we have:
	\begin{equation}
		(g \circ f)'_i(\xb) = \xnor(g'(f(\xb)), f'_i(\xb)).
	\end{equation}
\end{proposition}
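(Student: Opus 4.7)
The plan is to reduce the multivariate statement to the univariate composition rule already established in \propref{prop:B2BVariation}(3) by freezing every coordinate except the $i$-th. Fix $\xb = (x_1,\ldots,x_n) \in \Bb^n$ and define the restriction
\begin{equation*}
	\tilde f \colon \Bb \to \Bb, \qquad \tilde f(t) \deq f(x_1,\ldots,x_{i-1}, t, x_{i+1},\ldots,x_n).
\end{equation*}
Then $\tilde f(x_i) = f(\xb)$ and $\tilde f(\neg x_i) = f(\xb_{\neg i})$, so by comparing \defref{def:BoolFuncVar0} with \defref{def:BoolFuncVar} one reads off
\begin{equation*}
	f'_i(\xb) = \xnor\bigl(\bvar(x_i \to \neg x_i), \bvar f(\xb \to \xb_{\neg i})\bigr) = \xnor\bigl(\bvar(x_i \to \neg x_i), \bvar \tilde f(x_i \to \neg x_i)\bigr) = \tilde f'(x_i).
\end{equation*}

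Next I would observe that the same freezing applied to the composition gives $(g\circ f)(\xb) = (g \circ \tilde f)(x_i)$ and $(g\circ f)(\xb_{\neg i}) = (g \circ \tilde f)(\neg x_i)$, hence by the same argument $(g \circ f)'_i(\xb) = (g \circ \tilde f)'(x_i)$. Now $g \circ \tilde f$ is a composition of two univariate Boolean functions, so \propref{prop:B2BVariation}(3) applies and yields
\begin{equation*}
	(g \circ \tilde f)'(x_i) = \xnor\bigl(g'(\tilde f(x_i)),\, \tilde f'(x_i)\bigr) = \xnor\bigl(g'(f(\xb)),\, f'_i(\xb)\bigr),
\end{equation*}
which is exactly the claimed identity.

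Since the univariate composition rule is already proved, essentially nothing non-trivial remains; the only thing to check carefully is that the partial variation $f'_i$ as given in \defref{def:BoolFuncVar0} really coincides with the ordinary variation of the restricted function $\tilde f$, and the analogous identification for $g \circ f$. Both are immediate from the definitions, so I do not anticipate any real obstacle. The argument would generalize verbatim to the mixed-codomain settings ($g$ or $f$ valued in $\Nb$ or $\Db$) by invoking \propref{prop:B2NVariation}(1) or \propref{prop:Composition}(2) in place of \propref{prop:B2BVariation}(3).
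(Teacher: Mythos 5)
Your proposal is correct, but it takes a genuinely different route from the paper. The paper proves the identity by direct computation in the multivariate setting: it expands $(g \circ f)'_i(\xb)$ from \defref{def:BoolFuncVar0}, rewrites the inner variation $\bvar g(f(\xb) \to f(\xb_{\neg i}))$ as $\xnor\bigl(\bvar f(\xb \to \xb_{\neg i}), g'(f(\xb))\bigr)$ via \propref{prop:B2BVariation}(1), and then regroups using the associativity of $\xnor$ --- in effect replaying the proof of the univariate chain rule \propref{prop:B2BVariation}(3) with $x$ replaced by the $i$-th coordinate. You instead freeze the other coordinates, identify $f'_i(\xb)$ with the ordinary variation of the restriction $\tilde f$ at $x_i$ (and likewise for $g \circ f$), and then invoke \propref{prop:B2BVariation}(3) as a black box. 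Both arguments are valid and rest on the same underlying facts; yours is more modular, makes explicit why the multivariate statement reduces to the univariate one, and, as you note, transfers verbatim to the mixed-codomain variants by substituting \propref{prop:B2NVariation}(1) or \propref{prop:Composition}(2), whereas the paper's computation would have to be repeated in each case. The only step worth writing out rather than asserting is the identification $\bvar f(\xb \to \xb_{\neg i}) = \bvar \tilde f(x_i \to \neg x_i)$, but this is immediate since both sides equal $\bvar\bigl(f(\xb) \to f(\xb_{\neg i})\bigr)$ by construction of $\tilde f$.
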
 
\begin{proof}
	By definition:
	\begin{align*}
		(g \circ f)'_i(\xb) & = \xnor\left(\bvar(x_i \to \neg x_i), \bvar g(f(\xb) \to f(\xb_{\neg i}))\right)\\
		& = \xnor\left(\bvar(x_i \to \neg x_i), \xnor\left(\bvar f(\xb \to \xb_{\neg i}), g'(f(\xb)) \right) \right), \; \textrm{\propref{prop:B2BVariation}(1)}\\\
		& = \xnor\left(\xnor(\bvar(x_i \to \neg x_i), \bvar f(\xb \to \xb_{\neg i})), g'(f(\xb))\right)\\
		& = \xnor\left(g'(f(\xb)), f'_i(\xb)\right). \qedhere
	\end{align*}
\end{proof}

\if False
	
	Let $f$ be a function taking Boolean variables, and $x$ be one of its variables. According to the intuition as previously described, our purpose is to relate an inversion of $x$ to the resulting variation of $f$. The following notations are used: 
	\begin{itemize}
		\item $\Bb$: $\{\True, \False\}$ equipped with Boolean logic,
		\item $\Db$: a discrete set such as natural or integer numbers,
		\item $\Rb$: real set.
	\end{itemize}
	
	\begin{definition}\label{def:BoolOrder}
		Order relations `$<$' and `$>$' in $\Bb$ are defined as follows:
		\begin{equation}
			\False < \True, \quad \True > \False.
		\end{equation}
	\end{definition}
	
	The differentiation in $\Rb$ is defined by introducing some variation $\delta > 0$ to the variable. In a discrete domain, although such a continuous neighborhood does not exist, it is possible to introduce a unit increase or unit decrease to the variable. However, a variable in $\Bb$ can only have one-direction variation, i.e., vary to $\False$ when being at $\True$, and vary to $\True$ when being at $\False$. Therefore, we firstly need to define the variation of a Boolean variable.
	
	\begin{definition}\label{def:BoolVariation}
		For $x \in \Bb$, a variation of $x$, denoted $\bvar{x}$, is defined as follows:
		\begin{equation}
		\bvar{x} \deq \begin{cases} 
			\True, & \textrm{if } y > x,\\
			0, & \textrm{if } y = x,\\
			\False, & \textrm{if } y < x,
		\end{cases}
		\end{equation}
		where $y$ is the resulting new value of $x$ due to this variation.
	\end{definition}
	
	\begin{definition}
		Let $\xb = (x_1, \ldots, x_n) \in \Bb^n$, $n \geq 1$. For $ 1 \le i \le n$, let $x'_i$ be the new value of $x_i$ due to a variation. Write:
		\begin{align}
			\xb'_{i} & := (x_1, \ldots, x_{i-1}, x'_i, x_{i+1}, \ldots, x_n),\\
			\bvar_i\xb & := (0, \ldots, 0, \bvar{x_i}, 0, \ldots, 0).
		\end{align}
	\end{definition}
	
	\begin{definition}\label{def:BoolFuncVar0}
		Let $f \in \Fc(\Bb^n, \Bb)$, $n \ge 1$, and $\xb \in \Bb^n$, 
		\begin{enumerate}
			\item The variation of $f$ due to the variation $\xb \to \xb' \in \Bb$, denoted $\bvar{f}(\xb \to \xb')$, is defined according to \defref{def:BoolVariation}, i.e.,
			\begin{equation}
				\bvar{f}(\xb \to \xb') \deq 
				\begin{cases} 
					\True, & \textrm{if } f(\xb') > f(\xb),\\
					0, & \textrm{if } f(\xb') = f(\xb),\\
					\False, & \textrm{if } f(\xb') < f(\xb).
				\end{cases}
			\end{equation}
			\item 
		\end{enumerate}
		For $\xb \in \Bb^n$, the variation of $f$ due to a variation of the $i$-th coordinate of $\xb$, denoted $\bvar_i{f}$, is defined according to \defref{def:BoolVariation}, i.e.,
		\begin{equation}
			\bvar_i{f} := \bvar{f}(\xb \to \xb'_i) \deq 
			\begin{cases} 
				\True, & \textrm{if } f(\xb'_i) > f(\xb),\\
				0, & \textrm{if } f(\xb'_i) = f(\xb),\\
				\False, & \textrm{if } f(\xb'_i) < f(\xb).
			\end{cases}
		\end{equation}
	\end{definition}
	
	With reference to definitions \ref{def:BoolVariation} and \ref{def:BoolFuncVar0}, we define a three-valued logic as follows.

	\begin{definition}\label{def:BoolFuncVariation}
		Let $f \colon \Bb^n \to \Bb$ and $\xb = (x_1, \ldots, x_n) \in \Bb^n$, $n \ge 1$. The variation of $f$ w.r.t a variation of the $i$-th coordinate of $\xb$, $1 \le i \le n$, denoted $\bvar_i{f}/\bvar{x_i}$, is defined as:
		\begin{equation}
			\frac{\bvar_i{f}}{\bvar{x_i}} \deq \xnor(\bvar{x_i},\bvar_i{f}).
		\end{equation}
		\if False
		given by the following truth table \ref{tab:BoolFuncVariation}:
		{\renewcommand{\arraystretch}{1.2}
		\begin{table}[!h]
			\centering
			\vspace{.2cm}
			\begin{tabular}{| c | C{2em} | C{2em} | C{2em} |}
				\toprule[1pt]
				\backslashbox{$\bvar{x_i}$}{$\bvar_i{f}$} & $\True$ & $0$ & $\False$ \\ \hline\hline 
				$\True$	& 	$ \True$ 	& $0$	& $\False$ \\ \hline 
				$0$  		& $0$ 		& $0$	& $0$ \\ \hline 
				$\False$	& $\False$ 	& $0$ 	& $\True$ \\ \bottomrule[1pt]
			\end{tabular}
			\caption{Truth table of Boolean function variation.}
			\label{tab:BoolFuncVariation}
		\end{table}}
		\fi
	\end{definition}

	\begin{proposition}\label{prop:BooleanChainrule}
		Let  $f \colon \Bb^n \to \Bb$,  $g \colon \Bb \to \Db$ where $\Db$ denotes $\Bb$, $\Nb$, $\Zb$, or $\Rb$, for $x_i \in \Bb$, $1 \leq i \leq n$:
		\begin{equation}
			\frac{\bvar{(g \circ f)}}{\bvar{x_i}} = \xnor(\frac{\bvar{g}}{\bvar{f}}, \frac{\bvar{f}}{\bvar{x_i}}).
		\end{equation}
	\end{proposition}
	\begin{proof}
		We have $\bvar{g}/\bvar{f} = \xnor(\bvar{g}, \bvar{f})$, and $\bvar{f}/\bvar{x_i} = \xnor(\bvar{f}, \bvar{x_i})$ due to \propref{prop:FuncVariation}. Using associativity of $\xnor$, we have: 
		\begin{align*}
			\xnor(\frac{\bvar{g}}{\bvar{f}}, \frac{\bvar{f}}{\bvar{x_i}}) 
			& = \xnor(\xnor(\bvar{g}, \bvar{f}), \xnor(\bvar{f}, \bvar{x_i})), \\
			& = \xnor(\xnor(\bvar{g}, \bvar{f}, \bvar{f}), \bvar{x_i}), \\
			& = \xnor(\xnor(\bvar{g}, \xnor(\bvar{f}, \bvar{f})), \bvar{x_i}), \\
			& = \xnor(\xnor(\bvar{g}, \True), \bvar{x_i}), \\
			& = \xnor(\bvar{g}, \bvar{x_i}), \\
			& = \bvar{g}/\bvar{x_i}.
		\end{align*}
		Hence the result follows. For illustration, the proof can be also obtained using truth table, which is given in \tref{tab:Composition} for the case $g \colon \Bb \to \Bb$. 
		{\renewcommand{\arraystretch}{1.0}
			\begin{table}[!bh]
				\centering
				\caption{Truth table of function composition. ``T'', ``F'', stands for $\True$, $\False$, resp.}
				\label{tab:Composition}
				\vspace{.2cm}
				\begin{tabular}{ccccccc}
					\toprule[1pt]
					\multirow{2}{*}{$\bvar{x_i}$} & \multirow{2}{*}{$\bvar{f}$} & \multirow{2}{*}{$\bvar{g}$} & \multirow{2}{*}{$\bvar{f}/\bvar{x_i}$} & \multirow{2}{*}{$\bvar{g}/\bvar{f}$} & \multirow{2}{*}{$\bvar{g}/\bvar{x_i}$} & \multirow{2}{*}{$\xnor(\frac{\bvar{g}}{\bvar{f}}, \frac{\bvar{f}}{\bvar{x_i}})$} \\
					& & & & & & \\
					\toprule[1pt]
					T & T & T & T & T & T & T \\
					T & T & F & T & F & F & F \\
					\midrule[0.25pt]
					T & F & T & F & F & T & T \\
					T & F & F & F & T & F & F \\
					\midrule[0.5pt]
					F & T & T & F & T & F & F \\
					F & T & F & F & F & T & T \\
					\midrule[0.25pt]
					F & F & T & T & F & F & F \\
					F & F & F & T & T & T & T \\
					\bottomrule[1pt]
				\end{tabular}
			\end{table}
		}
	\end{proof}

\fi

\section{Boolean Logic BackPropagation}

With the notions introduced in \sref{sec:BoolVariation}, we can write signals involved in the backpropagation process as shown in \fref{fig:Signals}. Therein, layer $l$ is a Boolean layer of consideration. For the sake of presentation simplicity, layer $l$ is assumed a fully connected layer, and:
\begin{equation}\label{eq:FC}
	x_{k, j}^{l+1} = w_{0, j}^l + \sum_{i=1}^m \Bm\pren{x_{k, i}^l, w_{i, j}^l}, \quad 1 \leq j \leq n,
\end{equation}
where $\Bm$ is the utilized Boolean logic, $k$ denotes sample index in the batch, $m$ and $n$ are the usual layer input and output sizes. Layer $l$ is connected to layer $l+1$ that can be an activation layer, a batch normalization, an arithmetic layer, or any others. Nature of $\bvar{L}/\bvar{x_{k,j}^{l+1}}$ depends on the property of layer $l+1$. For instance, it can be a usual gradient if layer $l+1$ is a real-valued arithmetic layer, or a Boolean variation if layer $l+1$ is a Boolean layer. 
Given $\bvar{L}/\bvar{x_{k,j}^{l+1}}$, Boolean layer $l$ needs to optimize its Boolean weights and compute signal $\bvar{L}/\bvar{x_{k,i}^{l}}$ for the upstream. 

\begin{figure}[!t]
	\centering
	\includegraphics[width=0.7\columnwidth]{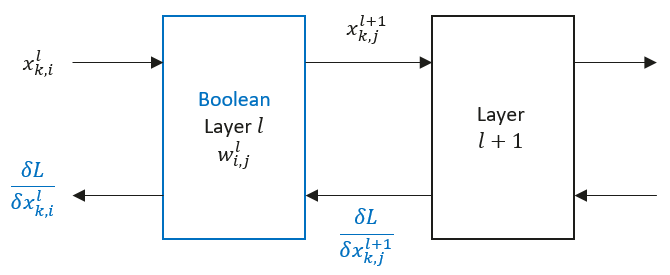}
	\caption{Illustration of signals with a Boolean linear layer. $L$ denotes the loss function.}
	\label{fig:Signals}
\end{figure}

\subsection{Optimization Logic}\label{sec:OptimLogic}

With reference to \fref{fig:Signals}, we develop logic for optimizing each weight $w_{i, j}^l$ of Boolean layer $l$. Following the intuition as previously described, we need to compute the variation of the loss w.r.t. weight $w_{i, j}^l$, i.e., $\bvar{L}/\bvar{w_{i,j}^l}$. Using \propref{prop:Chainrule}, for data sample $k$ in the batch, we can have:
\begin{equation}\label{eq:AtomicWeightVariation}
	q_{i,j,k}^l := \frac{\bvar{L}}{\bvar{w_{i,j}^{l}}}|_k = \xnor\pren{\frac{\bvar{L}}{\bvar{x_{k,j}^{l+1}}}, \frac{\bvar{x_{k,j}^{l+1}}}{\bvar{w_{i,j}^l}}},
\end{equation}
for which we need to derive $\bvar{x_{k,j}^{l+1}}/\bvar{w_{i,j}^l}$ taking into account the utilized logic $\Bm$. With reference to \eref{eq:FC}, the variation of $x_{k,j}^{l+1}$ w.r.t., $w_{i,j}^l$ as defined in \sref{sec:BoolVariation} can be obtained by simply establishing its truth table. For XOR neuron, its variation truth table is given in \tref{tab:AtomicWeightXOR} from which we can obtain:
\begin{equation}\label{eq:VarXOR}
	\textrm{XOR neuron: } \frac{\bvar{x_{k,j}^{l+1}}}{\bvar{w_{i,j}^l}} = \neg x_{k,i}^l.
\end{equation}
Following the same method, one can obtain the result for any other Boolean logic. In particular, for XNOR neuron, given that $\xnor = \neg \xor$, we can directly obtain the following: 
\begin{equation}\label{eq:VarXNOR}
	\textrm{XNOR neuron: } \frac{\bvar{x_{k,j}^{l+1}}}{\bvar{w_{i,j}^l}} = x_{k,i}^l.
\end{equation}
Hence, per-sample variation $q_{i,j,k}^l$ of weight $w_{i,j}^l$ can be obtained for any utilized logic $\Bm$. 

We now aggregate it over the batch dimension. Due to \eref{eq:AtomicWeightVariation}, datatype of $\bvar{L}/\bvar{x_{k,j}^{l+1}}$ decides the that of $q_{i,j,k}^l$, i.e., Boolean or not. In the case that $\bvar{L}/\bvar{x_{k,j}^{l+1}}$ is non Boolean, so is $q_{i,j,k}^l$, and the aggregation is performed by real summation. In the case that $q_{i,j,k}^l$ is Boolean, the aggregation can be based on majority rule, i.e., the difference between the number of True and of False. Both cases can be formulated as follows.
\begin{definition}\label{def:RealBoolValuation}
	Let $\one{\cdot}$ be the indicator function. For $b \in \Bb$ and variable $x$, define:
	\begin{equation}
		\one{x = b} = \begin{cases}
			1, & \textrm{if } x_{\bool} = b,\\
			0, & \textrm{otherwise}.
		\end{cases}
	\end{equation}
\end{definition}
The aggregation of weight variation is given as:
\begin{equation}\label{eq:AggrWeightVariation}
	q_{i,j}^l := \frac{\bvar{L}}{\bvar{w_{i,j}^{l}}} = \sum_k \one{q_{i,j,k}^l = \True}|q_{i,j,k}^l| - \sum_k \one{q_{i,j,k}^l = \False}|q_{i,j,k}^l|.
\end{equation}
Given this information, the rule for inverting $w_{i,j}^{l}$ subjected to decreasing the loss is simply given according to its definition as:
\begin{equation}\label{eq:OptimLogic}
	\boxed{w_{i,j}^l = \neg w_{i,j}^l  \textrm{ if } \xnor\pren{q_{i,j}^l, w_{i,j}^l} = \True.}
\end{equation}  
It can be also simply given by truth table \tref{tab:OptimLogic}. Equation \ref{eq:OptimLogic} is the core optimization logic based on which more sophisticated forms of optimizer can be developed in the same manner as different methods such as Adam have been developed from the basic gradient descent principle. For instance, the following is one optimizer that accumulates $q_{i,j}^l$ over training iterations. Denote by $q_{i,j}^{l,t}$ the optimization signal at iteration $t$, and by $m_{i,j}^{l,t}$ its accumulator with $m_{i,j}^{l,0} := 0$ and: 
\begin{equation}\label{eq:Accum}
	m_{i,j}^{l,t+1} = \beta^t m_{i,j}^{l,t} + \eta^t q_{i,j}^{l,t+1},
\end{equation}
where $\eta^t$ is an accumulation factor that can be tuned as a hyper-parameter, and $\beta^t$ is an auto-regularizing factor that expresses the system's state at time $t$. Its usage is linked to brain plasticity \citep{Fuchs2014} and Hebbian theory \citep{Hebb2005}, stating that ``cells that fire together wire together'', forcing weights to adapt to their neighborhood during the learning process. For the chosen weight's neighborhood, for instance, neuron, layer, or network level, $\beta^t$ is given as:
\begin{equation}
	\beta^t = \frac{\textrm{Nb of unchanged weights at } t}{\textrm{Total number of weights}}.
\end{equation}
In the experiments presented later, $\beta^t$ is set to per-layer basis and initialized as $\beta^0 = 1$. Finally, the accumulation-based optimizer is described in \aref{algo:AccumOptim}.

{\renewcommand{\arraystretch}{1.0}
	\begin{table}[!t]
		\centering
		\caption{Variation truth table of XOR neuron.}
		\label{tab:AtomicWeightXOR}
		\begin{tabular}{cccccc}
			\toprule 
			\multirow{2}{*}{$x_{k,i}^l$} 
			& \multirow{2}{*}{$w_{i,j}^l$} 
			& \multirow{2}{*}{$\neg w_{i,j}^l$} 
			& \multirow{2}{*}{$\bvar{w_{i,j}^l}$} 
			& \multirow{2}{*}{$\bvar{x_{k,j}^{l+1}}$} 
			& \multirow{2}{*}{$\bvar{x_{k,j}^{l+1}}/\bvar{w_{i,j}^l}$}   \\
			& & & & & \\
			\midrule 
			$\True$  &  $\True$  &  $\False$  & $\False$  &  $\True$  & $\False$  \\
			
			$\True$  &  $\False$ &  $\True$   & $\True$   &  $\False$ & $\False$  \\
			
			$\False$ &  $\True$  &  $\False$  & $\False$  &  $\False$ & $\True$   \\
			
			$\False$ &  $\False$ &  $\True$   & $\True$   &  $\True$  & $\True$   \\
			\bottomrule
		\end{tabular}
	\end{table}
}

{\renewcommand{\arraystretch}{1.0}
	\begin{table}[!t]
		\centering
		\caption{Optimization logic truth table.}
		\label{tab:OptimLogic}
		\begin{tabular}{ccc}
			\toprule 
			\multirow{2}{*}{$\bvar{L}/\bvar{w_{i,j}^{l}}$} 
			& \multirow{2}{*}{$w_{i,j}^l$} 
			& \multirow{2}{*}{Action} \\
			& & \\
			\midrule 
			$\True$  &  $\True$  &  Invert \\
			
			$\True$  &  $\False$ &  Keep   \\
			
			$\False$ &  $\True$  &  Keep   \\
			
			$\False$ &  $\False$ &  Invert \\
			\bottomrule
		\end{tabular}
	\end{table}
}

\begin{algorithm}[!t]
	\SetKwInOut{Input}{Input}
	\SetKwInOut{Output}{Output}
	\SetKwBlock{Loop}{Loop}{end}
	\SetKwBlock{Initialize}{Init}{end}
	\SetKwFor{When}{When}{do}{end}
	\SetKwFunction{Wait}{Wait}
	\SetAlgoLined
	\Input{
		$\eta^0$ --accumulation rate, $T$ --number of iterations\;
	}
	\Initialize{	
		$\beta^0 = 1$\;
		$m_{i,j}^{l,0} = 0$\;
	}
	\For{$t=0,\dots, T-1$}{
		Compute $q_{i,j}^{l,t+1}$ \;
		Update $m_{i,j}^{l,t+1}$ \;
		\eIf{$\xnor\pren{m_{i,j}^{l,t+1}, w_{i,j}^{l,t}} = \True$}{
			$w_{i,j}^{l,t+1} \gets \neg w_{i,j}^{l,t}$\;
			$m_{i,j}^{l,t+1} \gets 0$\;
		}{
			$w_{i,j}^{l,t+1} \gets w_{i,j}^{l,t}$\;
		}
		Update $\eta^{t+1}$\;
		Update $\beta^{t+1}$\;
	}
	\caption{Accumulate optimizer}
	\label{algo:AccumOptim}
\end{algorithm}

\subsection{Backpropagation Logic}\label{sec:BackpropLogic}

In this section, we compute the backpropagation signal $\bvar{L}/\bvar{x_{k,i}^{l}}$ of the Boolean layer $l$. Following \propref{prop:Chainrule}, for each received signal $\bvar{L}/\bvar{x_{k,j}^{l+1}}$, we have:
\begin{equation}\label{eq:AtomicBprop}
	g_{k,i,j}^l := \frac{\bvar{L}}{\bvar{x_{k,i}^{l}}}|_j = \xnor\pren{\frac{\bvar{L}}{\bvar{x_{k,j}^{l+1}}}, \frac{\bvar{x_{k,j}^{l+1}}}{\bvar{x_{k,i}^l}}},
\end{equation}
for which $\bvar{x_{k,j}^{l+1}}/\bvar{x_{k,i}^l}$ needs to be derived taking into account the utilized logic $\Bm$ as in \sref{sec:OptimLogic}. In particular, using the symmetry of Boolean logic, we can directly obtain it for XOR neuron using \eref{eq:VarXOR}, and for XNOR neuron using \eref{eq:VarXNOR} as follows:
\begin{equation}
	\frac{\bvar{x_{k,j}^{l+1}}}{\bvar{x_{k,i}^l}} = \begin{cases}
		\neg w_{i,j}^l, & \textrm{ for XOR neuron},\\
		w_{i,j}^l, & \textrm{ for XNOR neuron}.
	\end{cases}
\end{equation}

Aggregating $g_{k,i,j}^l$ over the output dimension follows the same majority rule as described in \sref{sec:OptimLogic}. Hence,
\begin{equation}\label{eq:AggrBprop}
	\boxed{\frac{\bvar{L}}{\bvar{x_{k,i}^{l}}} = \sum_j \one{g_{k,i,j}^l = \True}|g_{k,i,j}^l| - \sum_j \one{g_{k,i,j}^l = \False}|g_{k,i,j}^l|.}
\end{equation}

\subsection{Preliminary Assessment}

\tref{tab:Cifar} shows a case study of this concept on CIFAR-10 with VGG-Small architecture. Therein, Boolean method is tested with 2 architectures: one without using batch normalization, and the other using batch normalization and the activation from \cite{Liu2019}.

\begin{table}[!h]
	\centering
	\caption{Experimental results on CIFAR-10 with VGG-Small.}
	\vspace{.2cm}
		\begin{tabular}{ c  c  c  c} 
			\toprule
			\textbf{Method} 
			& \textbf{W/A} 
			& \textbf{Training} 
			& \textbf{Test Accu (\%)} \\
			\midrule\midrule
			VGG-Small (FP) \cite{Zhang2018} 		& 32/32   	& Latent      	& 93.80 \\
			BinaryConnect \cite{Courbariaux2015}	& 1/32 		& Latent      	& 90.10 \\
			XNOR-Net \cite{Rastegari2016}   		& 1/1     	& Latent      	& 89.83 \\
			BNN \cite{Hubara2017}					& 1/1    	& Latent  		& 89.85 \\
			Boolean Logic w/o BN (Ours)         	& 1/1       & Boolean      	& 90.29 \\		
			Boolean Logic + BN (Ours)            	& 1/1       & Boolean  		& \textbf{92.37} \\
			\bottomrule
	\end{tabular}
	\label{tab:Cifar}
\end{table}

\section{Conclusion}

The notion of Boolean variation is introduced at the first time based on which Boolean logic backpropagation is proposed, replacing the gradient backpropagation. This is a new mathematical principle allowing for building and training deep models directly in Boolean domain with Boolean logic. Future investigations include evaluating performance and complexity gains in deep learning tasks, and applying it to different architectures including transformers.

\bibliographystyle{abbrvnat}
\bibliography{IEEEabrv,bibliography.bib}



\if False

Let $f$ be a function taking Boolean variables, and $x$ be one of its variables. According to the intuition as previously described, our purpose is to relate an inversion of $x$ to the resulting variation of $f$. The following notations are used: 
\begin{itemize}
	\item $\Bb$: $\{\True, \False\}$ equipped with Boolean logic,
	\item $\Db$: a discrete set such as natural or integer numbers,
	\item $\Rb$: real set.
\end{itemize}

\begin{definition}\label{def:BoolOrder}
	Order relations `$<$' and `$>$' in $\Bb$ are defined as follows:
	\begin{equation}
		\False < \True, \quad \True > \False.
	\end{equation}
\end{definition}

The differentiation in $\Rb$ is defined by introducing some variation $\delta > 0$ to the variable. In a discrete domain, although such a continuous neighborhood does not exist, it is possible to introduce a unit increase or unit decrease to the variable. However, a variable in $\Bb$ can only have one-direction variation, i.e., vary to $\False$ when being at $\True$, and vary to $\True$ when being at $\False$. Therefore, we firstly need to define the variation of a Boolean variable.

\begin{definition}\label{def:BoolVariation}
	For $x \in \Bb$, a variation of $x$, denoted $\bvar{x}$, is defined as follows:
	\begin{equation}
		\bvar{x} \deq \begin{cases} 
			\True, & \textrm{if } x' > x,\\
			0, & \textrm{if } x' = x,\\
			\False, & \textrm{if } x' < x,
		\end{cases}
	\end{equation}
	where $x'$ is the resulting new value of $x$ due to this variation.
\end{definition}
	
	\begin{definition}
		Let $\xb = (x_1, \ldots, x_n) \in \Bb^n$, $n \geq 1$. For $ 1 \le i \le n$, let $x'_i$ be the new value of $x_i$ due to a variation. Write:
		\begin{align}
			\xb'_{i} & := (x_1, \ldots, x_{i-1}, x'_i, x_{i+1}, \ldots, x_n),\\
			\bvar_i\xb & := (0, \ldots, 0, \bvar{x_i}, 0, \ldots, 0).
		\end{align}
	\end{definition}
	
	\begin{definition}\label{def:BoolFuncVar0}
		Let $f \in \Fc(\Bb^n, \Bb)$, $n \ge 1$, and $\xb \in \Bb^n$, 
		\begin{enumerate}
			\item The variation of $f$ due to the variation $\xb \to \xb' \in \Bb$, denoted $\bvar{f}(\xb \to \xb')$, is defined according to \defref{def:BoolVariation}, i.e.,
			\begin{equation}
				\bvar{f}(\xb \to \xb') \deq 
				\begin{cases} 
					\True, & \textrm{if } f(\xb') > f(\xb),\\
					0, & \textrm{if } f(\xb') = f(\xb),\\
					\False, & \textrm{if } f(\xb') < f(\xb).
				\end{cases}
			\end{equation}
			\item 
		\end{enumerate}
		For $\xb \in \Bb^n$, the variation of $f$ due to a variation of the $i$-th coordinate of $\xb$, denoted $\bvar_i{f}$, is defined according to \defref{def:BoolVariation}, i.e.,
		\begin{equation}
			\bvar_i{f} := \bvar{f}(\xb \to \xb'_i) \deq 
			\begin{cases} 
				\True, & \textrm{if } f(\xb'_i) > f(\xb),\\
				0, & \textrm{if } f(\xb'_i) = f(\xb),\\
				\False, & \textrm{if } f(\xb'_i) < f(\xb).
			\end{cases}
		\end{equation}
	\end{definition}
	
	With reference to definitions \ref{def:BoolVariation} and \ref{def:BoolFuncVar0}, we define a three-valued logic as follows.
	
	\begin{definition}
		Define $\Mb$ to be a three-value logic given as $\Mb \deq \{\Bb, 0\}$ with logic connectives given in \tref{tab:ThreeValueLogic}.
		{\renewcommand{\arraystretch}{1.2}
			\begin{table}[!h]
				\if False
				\begin{subtable}[c]{0.1\textwidth}
					\centering
					\begin{tabular}{| c | c |}
						\hline
						$a$  	&  	$\neg a$ 	\\ \hline 
						$\True$	& 	$ \False$ 	\\ \hline 
						$0$  		& $0$ 		\\ \hline 
						$\False$	& $\True$ 	\\ \hline
					\end{tabular}
				\end{subtable}%
				\fi
				\begin{subtable}[c]{\textwidth}
					\centering
					\begin{tabular}{| c | c |}
						\hline
						$a$  	&  	$\neg a$ 	\\ \hline 
						$\True$	& 	$ \False$ 	\\ \hline 
						$0$  		& $0$ 		\\ \hline 
						$\False$	& $\True$ 	\\ \hline
					\end{tabular}
					\subcaption{Negation}
				\end{subtable}
				\begin{subtable}{0.25\textwidth}
					\centering
					\vspace{0.5cm}
					\begin{tabular}{| c | c | c | c | c |}
						\hline
						\multicolumn{2}{|c|}{\multirow{2}{*}{$\andd$}} & \multicolumn{3}{c|}{$b$} 	\\ \cline{3-5} 
						\multicolumn{2}{|c|}{}				& $\True$	& $0$	& $\False$ 	\\ \hline
						\multirow{3}{*}{$a$}	& $\True$	& $\True$ 	& $0$	& $\False$	\\ \cline{2-5}
						& $0$		& $0$  		& $0$	& $0$		\\ \cline{2-5}
						& $\False$	& $\False$  & $0$	& $\False$	\\ \hline
					\end{tabular}
					\subcaption{AND}
				\end{subtable}%
				\begin{subtable}{0.25\textwidth}
					\centering
					\vspace{0.5cm}
					\begin{tabular}{| c | c | c | c | c |}
						\hline
						\multicolumn{2}{|c|}{\multirow{2}{*}{$\orr$}} & \multicolumn{3}{c|}{$b$} 	\\ \cline{3-5} 
						\multicolumn{2}{|c|}{}				& $\True$	& $0$	& $\False$ 	\\ \hline
						\multirow{3}{*}{$a$}	& $\True$	& $\True$ 	& $0$	& $\True$	\\ \cline{2-5}
						& $0$		& $0$  		& $0$	& $0$		\\ \cline{2-5}
						& $\False$	& $\True$  	& $0$	& $\False$	\\ \hline
					\end{tabular}
					\subcaption{OR}			
				\end{subtable}%
				\begin{subtable}{0.25\textwidth}
					\centering
					\vspace{0.5cm}
					\begin{tabular}{| c | c | c | c | c |}
						\hline
						\multicolumn{2}{|c|}{\multirow{2}{*}{$\xor$}} & \multicolumn{3}{c|}{$b$} 	\\ \cline{3-5} 
						\multicolumn{2}{|c|}{}				& $\True$	& $0$	& $\False$ 	\\ \hline
						\multirow{3}{*}{$a$}	& $\True$	& $\False$ 	& $0$	& $\True$	\\ \cline{2-5}
						& $0$		& $0$  		& $0$	& $0$		\\ \cline{2-5}
						& $\False$	& $\True$  	& $0$	& $\False$	\\ \hline
					\end{tabular}
					\subcaption{XOR}			
				\end{subtable}%
				\begin{subtable}{0.25\textwidth}
					\centering
					\vspace{0.5cm}
					\begin{tabular}{| c | c | c | c | c |}
						\hline
						\multicolumn{2}{|c|}{\multirow{2}{*}{$\xnor$}} & \multicolumn{3}{c|}{$b$} 	\\ \cline{3-5} 
						\multicolumn{2}{|c|}{}				& $\True$	& $0$	& $\False$ 	\\ \hline
						\multirow{3}{*}{$a$}	& $\True$	& $\True$ 	& $0$	& $\False$	\\ \cline{2-5}
						& $0$		& $0$  		& $0$	& $0$		\\ \cline{2-5}
						& $\False$	& $\False$ 	& $0$	& $\True$	\\ \hline
					\end{tabular}
					\subcaption{XNOR}			
				\end{subtable}
				\caption{Logic connectives of the three-value logic $\Mb$.}
				\label{tab:ThreeValueLogic}
		\end{table}}
	\end{definition}
	
	$\Mb$ can be seen as an extension of the Boolean logic by adding element $0$ whose meaning is ``ignored''. Its logic connectives are most similar to those of Bochvar's internal three-valued logic.
	
	\begin{definition}\label{def:BoolFuncVariation}
		Let $f \colon \Bb^n \to \Bb$ and $\xb = (x_1, \ldots, x_n) \in \Bb^n$, $n \ge 1$. The variation of $f$ w.r.t a variation of the $i$-th coordinate of $\xb$, $1 \le i \le n$, denoted $\bvar_i{f}/\bvar{x_i}$, is defined as:
		\begin{equation}
			\frac{\bvar_i{f}}{\bvar{x_i}} \deq \xnor(\bvar{x_i},\bvar_i{f}).
		\end{equation}
		\if False
		given by the following truth table \ref{tab:BoolFuncVariation}:
		{\renewcommand{\arraystretch}{1.2}
			\begin{table}[!h]
				\centering
				\vspace{.2cm}
				\begin{tabular}{| c | C{2em} | C{2em} | C{2em} |}
					\toprule[1pt]
					\backslashbox{$\bvar{x_i}$}{$\bvar_i{f}$} & $\True$ & $0$ & $\False$ \\ \hline\hline 
					$\True$	& 	$ \True$ 	& $0$	& $\False$ \\ \hline 
					$0$  		& $0$ 		& $0$	& $0$ \\ \hline 
					$\False$	& $\False$ 	& $0$ 	& $\True$ \\ \bottomrule[1pt]
				\end{tabular}
				\caption{Truth table of Boolean function variation.}
				\label{tab:BoolFuncVariation}
		\end{table}}
		\fi
	\end{definition}

	\begin{proposition}\label{prop:BooleanChainrule}
		Let  $f \colon \Bb^n \to \Bb$,  $g \colon \Bb \to \Db$ where $\Db$ denotes $\Bb$, $\Nb$, $\Zb$, or $\Rb$, for $x_i \in \Bb$, $1 \leq i \leq n$:
		\begin{equation}
			\frac{\bvar{(g \circ f)}}{\bvar{x_i}} = \xnor(\frac{\bvar{g}}{\bvar{f}}, \frac{\bvar{f}}{\bvar{x_i}}).
		\end{equation}
	\end{proposition}
	\begin{proof}
		We have $\bvar{g}/\bvar{f} = \xnor(\bvar{g}, \bvar{f})$, and $\bvar{f}/\bvar{x_i} = \xnor(\bvar{f}, \bvar{x_i})$ due to \propref{prop:FuncVariation}. Using associativity of $\xnor$, we have: 
		\begin{align*}
			\xnor(\frac{\bvar{g}}{\bvar{f}}, \frac{\bvar{f}}{\bvar{x_i}}) 
			& = \xnor(\xnor(\bvar{g}, \bvar{f}), \xnor(\bvar{f}, \bvar{x_i})), \\
			& = \xnor(\xnor(\bvar{g}, \bvar{f}, \bvar{f}), \bvar{x_i}), \\
			& = \xnor(\xnor(\bvar{g}, \xnor(\bvar{f}, \bvar{f})), \bvar{x_i}), \\
			& = \xnor(\xnor(\bvar{g}, \True), \bvar{x_i}), \\
			& = \xnor(\bvar{g}, \bvar{x_i}), \\
			& = \bvar{g}/\bvar{x_i}.
		\end{align*}
		Hence the result follows. For illustration, the proof can be also obtained using truth table, which is given in \tref{tab:Composition} for the case $g \colon \Bb \to \Bb$. 
		{\renewcommand{\arraystretch}{1.0}
			\begin{table}[!bh]
				\centering
				\caption{Truth table of function composition. ``T'', ``F'', stands for $\True$, $\False$, resp.}
				\label{tab:Composition}
				\vspace{.2cm}
				\begin{tabular}{ccccccc}
					\toprule[1pt]
					\multirow{2}{*}{$\bvar{x_i}$} & \multirow{2}{*}{$\bvar{f}$} & \multirow{2}{*}{$\bvar{g}$} & \multirow{2}{*}{$\bvar{f}/\bvar{x_i}$} & \multirow{2}{*}{$\bvar{g}/\bvar{f}$} & \multirow{2}{*}{$\bvar{g}/\bvar{x_i}$} & \multirow{2}{*}{$\xnor(\frac{\bvar{g}}{\bvar{f}}, \frac{\bvar{f}}{\bvar{x_i}})$} \\
					& & & & & & \\
					\toprule[1pt]
					T & T & T & T & T & T & T \\
					T & T & F & T & F & F & F \\
					\midrule[0.25pt]
					T & F & T & F & F & T & T \\
					T & F & F & F & T & F & F \\
					\midrule[0.5pt]
					F & T & T & F & T & F & F \\
					F & T & F & F & F & T & T \\
					\midrule[0.25pt]
					F & F & T & T & F & F & F \\
					F & F & F & T & T & T & T \\
					\bottomrule[1pt]
				\end{tabular}
			\end{table}
		}
	\end{proof}
	
	
	==== Old to be revised ==== 
	
	\begin{notation}
		Let $\xb = (x_1, \ldots, x_n) \in \Bb^n$ and $f \colon \Bb^n \to \Bb$, for $n \geq 1$. For the sake of notational simplicity, the following shorthand notations are used:
		\begin{itemize}
			\item $\xb'_{i} := (x_1, \ldots, x_{i-1}, x'_i, x_{i+1}, \ldots, x_n)$.
			\item $f(\neg x_i) := f(x_1, \ldots, x_{i-1}, \neg x_i, x_{i+1}, \ldots, x_n)$, for $1 \leq i \leq n$.
			\item $\bvar{f(x_i)} := \bvar{f(\xb \to \xb_i)}$, for $1 \leq i \leq n$, where $\xb_{i} := (x_1, \ldots, x_{i-1}, \neg x_i, x_{i+1}, \ldots, x_n)$.
		\end{itemize} 
	\end{notation}

	\begin{definition}\label{def:BoolFuncVariationOld}
		For $f \colon \Bb^n \to \Bb$, 
		the variation of $f$ w.r.t. $x_i \in \Bb$, for $1 \leq i \leq n$, denoted $\bvar{f}/\bvar{x_i}$, is defined as follows:
		\begin{equation}
			\frac{\bvar{f}}{\bvar{x_i}} \deq 
			\begin{cases} 
				\True, & \textrm{if } \bvar{f} = \bvar{x_i},\\
				\False, & \textrm{otherwise}.
			\end{cases}
		\end{equation}
	\end{definition}
	
	Intuitively, the variation of a function $f$ w.r.t. a Boolean input $x_i$ is defined as $\True$ if they vary in the same direction, and as $\False$ otherwise. 
	
	\begin{definition}\label{def:Real2Bool}
		For $x \in \{\Nb, \Zb, \Rb\}$, denote by $x_{\bool}$ its Boolean value defined as $x_{\bool} = \True \Leftrightarrow x \ge 0$, and $x_{\bool} = \False \Leftrightarrow x < 0$, and denote by $|x|$ its magnitude.
	\end{definition}
	
	Hence, non-Boolean variable $x$ can be represented by the couple of logic $x_{\bool}$ and magnitude $|x|$. Using this, we can define the variation of a real-valued function w.r.t. a Boolean input.
	
	\begin{definition}\label{def:RealFuncVariation}
		For $f \colon \Bb^n \to \Db$ where $\Db$ denotes $\Nb$, $\Zb$, or $\Rb$, the variation of $f$ w.r.t. $x_i \in \Bb$, for $1 \leq i \leq n$, denoted $\bvar{f}/\bvar{x_i}$, is defined as follows:
		\begin{equation}
			\frac{\bvar{f}}{\bvar{x_i}} \deq y \in \Db, \textrm{ s.t. } |y| = |\bvar{f}|, \textrm{ and } y_{\bool} = \begin{cases} 
				\True, & \textrm{if } \bvar{f}_{\bool} = \bvar{x_i},\\
				\False, & \textrm{otherwise},
			\end{cases}
		\end{equation}
		where $\bvar{f}$ is the variation of $f$ in $\Db$. 
	\end{definition}
	
	\begin{remark}\label{rk:noderivatives}
		\defref{def:BoolFuncVariation} and \defref{def:RealFuncVariation} do not define a Boolean derivative, e.g., the one by \citep{Akers1959, Thayse1973}, knowing that ``there is no `good' Boolean derivative satisfying all the three basic derivative-like properties: additivity, homogeneity, and the Leibniz rule.'' \citep{Rudeanu2009}.
	\end{remark}
	
	As $\xnor(a,b) = \True \Leftrightarrow a = b$, the following property is direct from definitions \ref{def:BoolFuncVariation} and \ref{def:RealFuncVariation}.
	
	\begin{proposition}\label{prop:FuncVariation}
		With the same notation as above, $\bvar{f}/\bvar{x_i} = \xnor(\bvar{f}, \bvar{x_i})$.
	\end{proposition}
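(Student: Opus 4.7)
The approach is direct verification from the definitions of $\bvar{f}/\bvar{x_i}$ and of $\xnor$ in the three-valued logic $\Mb$, organized by a case analysis on whether the two variations take the value $0$ or land in $\Bb$. No composition or chain rule is needed: the identity is essentially an unfolding statement that pairs each row of the definition of the variation quotient with the corresponding entry of the $\xnor$ table.

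First I would dispose of the degenerate cases. If $\bvar{x_i} = 0$, i.e., $x'_i = x_i$, then $f$ is evaluated at the same input, so $\bvar{f} = 0$; by the $\xnor$ row of \tref{tab:ThreeValueLogic} we have $\xnor(0,0)=0$, while by its defining rule $\bvar{f}/\bvar{x_i}$ also evaluates to $0$ (no change is recorded). The symmetric case $\bvar{f}=0$ with $\bvar{x_i}\ne 0$ is handled identically, using $\xnor(\cdot,0)=0$ in $\Mb$ and the fact that the quotient collapses to $0$ whenever $f$ does not move.

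For the generic case $\bvar{x_i},\bvar{f}\in\Bb$, I would split on the codomain of $f$. When $f\in\Fc(\Bb^n,\Bb)$, the defining rule gives $\bvar{f}/\bvar{x_i}=\True$ iff $\bvar{f}=\bvar{x_i}$ and $\False$ otherwise; since on $\Bb$ we have $\xnor(a,b)=\True\Leftrightarrow a=b$, the two expressions coincide entry-by-entry. When $f\in\Fc(\Bb^n,\Nb)$, the definition specifies a numeric value of magnitude $|\bvar{f}|$ whose Boolean projection equals $\True$ iff $(\bvar{f})_{\bool}=\bvar{x_i}$; combining the mixed-type logic of \defref{def:MixedLogic} with \propref{prop:XNORAlgebra}(1) shows that $\xnor(\bvar{x_i},\bvar{f})$ carries exactly the same magnitude and the same projection, so the two sides agree.

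The main obstacle, to the extent there is one, is bookkeeping in the numeric case: ensuring that the ``ignored'' element $0\in\Mb$ propagates cleanly through the mixed-type $\xnor$ without producing a spurious $\True$ or $\False$, and that the sign/magnitude decomposition stays coherent. This is handled in one step by appealing to \propref{prop:XNORAlgebra}(1), which already packages the magnitude-and-projection behavior, so no independent verification is required beyond the case split above.
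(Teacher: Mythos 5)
Your proposal is correct and takes essentially the same route as the paper, which disposes of this proposition in a single remark ("As $\xnor(a,b) = \True \Leftrightarrow a = b$, the following property is direct from definitions") — exactly the equivalence you invoke in your generic case. The additional bookkeeping you supply, namely the degenerate $0$ cases and the magnitude-plus-projection check for numeric codomains via \propref{prop:XNORAlgebra}(1), is not spelled out in the paper but only makes explicit what its one-line argument leaves implicit.
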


	\begin{proposition}\label{prop:Chainrule}
		Let  $f \colon \Bb^n \to \Bb$,  $g \colon \Bb \to \Db$ where $\Db$ denotes $\Bb$, $\Nb$, $\Zb$, or $\Rb$, for $x_i \in \Bb$, $1 \leq i \leq n$:
		\begin{equation}
			\frac{\bvar{(g \circ f)}}{\bvar{x_i}} = \xnor(\frac{\bvar{g}}{\bvar{f}}, \frac{\bvar{f}}{\bvar{x_i}}).
		\end{equation}
	\end{proposition}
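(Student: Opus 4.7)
The plan is to imitate the univariate composition proof of \propref{prop:B2BVariation}(3), using \propref{prop:B2BVariation}(1) to absorb the inner variation of $g$. First I would unfold the definition of the partial variation from \defref{def:BoolFuncVar0}:
\[
(g\circ f)'_i(\xb) = \xnor\bigl(\bvar(x_i\to\neg x_i),\ \bvar g(f(\xb)\to f(\xb_{\neg i}))\bigr),
\]
where I have used that $(g\circ f)(\xb_{\neg i}) = g(f(\xb_{\neg i}))$ so the $\bvar$ of the composition coincides with $\bvar g$ evaluated along the corresponding transition of its scalar input.

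The key step is to apply \propref{prop:B2BVariation}(1) to the inner factor, viewing $g$ as a univariate Boolean function whose argument varies from $f(\xb)$ to $f(\xb_{\neg i})$. This rewrites $\bvar g(f(\xb)\to f(\xb_{\neg i}))$ as $\xnor\bigl(\bvar(f(\xb)\to f(\xb_{\neg i})),\ g'(f(\xb))\bigr)$, which remains valid in the degenerate case $f(\xb_{\neg i}) = f(\xb)$ because both sides then collapse to $0$. After substitution, the expression becomes a threefold nested $\xnor$ whose three arguments are $\bvar(x_i\to\neg x_i)$, $\bvar f(\xb\to\xb_{\neg i})$, and $g'(f(\xb))$.

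Finally, I would regroup using the associativity of $\xnor$ on $\Mb$ so that the first two arguments merge into $\xnor\bigl(\bvar(x_i\to\neg x_i),\ \bvar f(\xb\to\xb_{\neg i})\bigr) = f'_i(\xb)$ by \defref{def:BoolFuncVar0}, leaving $\xnor(g'(f(\xb)),\ f'_i(\xb))$ as claimed.

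The main obstacle, to the extent there is one, is purely bookkeeping inside the three-valued logic $\Mb$: one must verify that $\xnor$ on $\Mb$ is indeed associative and commutative (including when a $0$ appears), and that the degenerate branch $\bvar f(\xb\to\xb_{\neg i})=0$ propagates consistently through every step without prematurely forcing $g'(f(\xb))$ into play. Both facts are immediate from \tref{tab:ThreeValueLogic}, so no idea beyond that of the univariate chain rule is required.
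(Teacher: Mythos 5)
Your proof is correct and follows essentially the same route as the paper: the paper establishes this chain rule by expanding the variation quotients as $\xnor$'s and collapsing via associativity of $\xnor$, and your write-up coincides almost verbatim with the paper's proof of the multivariate composition rule $(g\circ f)'_i(\xb) = \xnor\pren{g'(f(\xb)), f'_i(\xb)}$ in Section~\ref{sec:BoolVariation}. Two minor remarks: your explicit handling of the degenerate branch $\bvar f(\xb\to\xb_{\neg i})=0$ is in fact slightly more careful than the paper's own argument, which silently uses $\xnor(\bvar{f},\bvar{f})=\True$; and for the numeric codomains $\Db\in\{\Nb,\Zb,\Rb\}$ the inner absorption step should cite \propref{prop:B2NVariation}(1) rather than \propref{prop:B2BVariation}(1), exactly as the paper does in \propref{prop:Composition}.
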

	\begin{proof}
		We have $\bvar{g}/\bvar{f} = \xnor(\bvar{g}, \bvar{f})$, and $\bvar{f}/\bvar{x_i} = \xnor(\bvar{f}, \bvar{x_i})$ due to \propref{prop:FuncVariation}. Using associativity of $\xnor$, we have: 
		\begin{align*}
			\xnor(\frac{\bvar{g}}{\bvar{f}}, \frac{\bvar{f}}{\bvar{x_i}}) 
			& = \xnor(\xnor(\bvar{g}, \bvar{f}), \xnor(\bvar{f}, \bvar{x_i})), \\
			& = \xnor(\xnor(\bvar{g}, \bvar{f}, \bvar{f}), \bvar{x_i}), \\
			& = \xnor(\xnor(\bvar{g}, \xnor(\bvar{f}, \bvar{f})), \bvar{x_i}), \\
			& = \xnor(\xnor(\bvar{g}, \True), \bvar{x_i}), \\
			& = \xnor(\bvar{g}, \bvar{x_i}), \\
			& = \bvar{g}/\bvar{x_i}.
		\end{align*}
		Hence the result follows. For illustration, the proof can be also obtained using truth table, which is given in \tref{tab:Composition} for the case $g \colon \Bb \to \Bb$. 
		{\renewcommand{\arraystretch}{1.0}
			\begin{table}[!bh]
				\centering
				\caption{Truth table of function composition. ``T'', ``F'', stands for $\True$, $\False$, resp.}
				\label{tab:Composition}
				\vspace{.2cm}
				\begin{tabular}{ccccccc}
					\toprule[1pt]
					\multirow{2}{*}{$\bvar{x_i}$} & \multirow{2}{*}{$\bvar{f}$} & \multirow{2}{*}{$\bvar{g}$} & \multirow{2}{*}{$\bvar{f}/\bvar{x_i}$} & \multirow{2}{*}{$\bvar{g}/\bvar{f}$} & \multirow{2}{*}{$\bvar{g}/\bvar{x_i}$} & \multirow{2}{*}{$\xnor(\frac{\bvar{g}}{\bvar{f}}, \frac{\bvar{f}}{\bvar{x_i}})$} \\
					& & & & & & \\
					\toprule[1pt]
					T & T & T & T & T & T & T \\
					T & T & F & T & F & F & F \\
					\midrule[0.25pt]
					T & F & T & F & F & T & T \\
					T & F & F & F & T & F & F \\
					\midrule[0.5pt]
					F & T & T & F & T & F & F \\
					F & T & F & F & F & T & T \\
					\midrule[0.25pt]
					F & F & T & T & F & F & F \\
					F & F & F & T & T & T & T \\
					\bottomrule[1pt]
				\end{tabular}
			\end{table}
		}
	\end{proof}

\fi

\end{document}